%%%%%%%% ICML 2024 EXAMPLE LATEX SUBMISSION FILE %%%%%%%%%%%%%%%%%

\documentclass{article}

% Recommended, but optional, packages for figures and better typesetting:
\usepackage{microtype}
\usepackage{graphicx}
\usepackage{subfigure}
\usepackage{booktabs} % for professional tables
\usepackage{tikz}

% hyperref makes hyperlinks in the resulting PDF.
% If your build breaks (sometimes temporarily if a hyperlink spans a page)
% please comment out the following usepackage line and replace
% \usepackage{icml2024} with \usepackage[nohyperref]{icml2024} above.
\usepackage{hyperref}

% Attempt to make hyperref an`d algorithmic work together better:

% Use the following line for the initial blind version submitted for review:
% \usepackage{icml2024}

% If accepted, instead use the following line for the camera-ready submission:
\usepackage[accepted]{icml2024}

\usepackage{caption}
\usepackage{listings}

% For theorems and such
\usepackage{amsmath}
\usepackage{amssymb}
\usepackage{mathtools}
\usepackage{amsthm}

% if you use cleveref..
\usepackage[capitalize,noabbrev]{cleveref}
\crefname{equation}{Eq.}{Eqs.}
\Crefname{equation}{Equation}{Equations}

\crefname{section}{Sec.}{Secs.}
\Crefname{section}{Section}{Sections}

\crefname{appendix}{App.}{Apps.}
\Crefname{appendix}{Appendix}{Appendixes}

\newcommand{\Lp}[2]{\left\Vert #2 \right\Vert_#1^#1}

%%%%%%%%%%%%%%%%%%%%%%%%%%%%%%%%
% THEOREMS
%%%%%%%%%%%%%%%%%%%%%%%%%%%%%%%%
\theoremstyle{plain}
\newtheorem{theorem}{Theorem}[section]

\newtheorem{lemma}[theorem]{Lemma}

\theoremstyle{definition}

\theoremstyle{remark}

% Todonotes is useful during development; simply uncomment the next line
%    and comment out the line below the next line to turn off comments
% \usepackage[disable,textsize=tiny]{todonotes}
\usepackage[textsize=tiny]{todonotes}

% The \icmltitle you define below is probably too long as a header.
% Therefore, a short form for the running title is supplied here:
\icmltitlerunning{Decoupled Weight Decay for Any $p$ Norm}

\begin{document}

\twocolumn[
\icmltitle{Decoupled Weight Decay for Any $p$ Norm}

% It is OKAY to include author information, even for blind
% submissions: the style file will automatically remove it for you
% unless you've provided the [accepted] option to the icml2024
% package.

% List of affiliations: The first argument should be a (short)
% identifier you will use later to specify author affiliations
% Academic affiliations should list Department, University, City, Region, Country
% Industry affiliations should list Company, City, Region, Country

% You can specify symbols, otherwise they are numbered in order.
% Ideally, you should not use this facility. Affiliations will be numbered
% in order of appearance and this is the preferred way.
% \icmlsetsymbol{equal}{*}

\begin{icmlauthorlist}
    \icmlauthor{Nadav Joseph Outmezguine}{ucb,lbl}
    \icmlauthor{Noam Levi}{epfl,tau}
\end{icmlauthorlist}

\icmlaffiliation{tau}{
Raymond and Beverly Sackler School of Physics and Astronomy, Tel-Aviv University,
Tel-Aviv 69978, Israel}
\icmlaffiliation{epfl}{\'Ecole Polytechnique F\'ed\'erale de Lausanne~(EPFL), Switzerland.}
\icmlaffiliation{ucb}{Berkeley Center for Theoretical Physics, University of California, Berkeley, CA 94720, USA}
\icmlaffiliation{lbl}{Theory Group, Lawrence Berkeley National Laboratory, Berkeley, CA 94720, USA}

\icmlcorrespondingauthor{Nadav J. Outmezguine}{NJO@Berkeley.edu}
\icmlcorrespondingauthor{Noam Levi}{noam.levi@epfl.ch, noam@mail.tau.ac.il}

% You may provide any keywords that you
% find helpful for describing your paper; these are used to populate
% the "keywords" metadata in the PDF but will not be shown in the document
\icmlkeywords{Sparsity, Regularization, Neural Networks, Optimization, Weight Decay}

\vskip 0.3in
]

% this must go after the closing bracket ] following \twocolumn[ ...

% This command actually creates the footnote in the first column
% listing the affiliations and the copyright notice.
% The command takes one argument, which is text to display at the start of the footnote.
% The \icmlEqualContribution command is standard text for equal contribution.
% Remove it (just {}) if you do not need this facility.

\printAffiliationsAndNotice{}  % leave blank if no need to mention equal contribution
% \printAffiliationsAndNotice{\icmlEqualContribution} % otherwise use the standard text.

\begin{abstract}
With the success of deep neural networks (NNs) in a variety of domains, the computational and storage requirements for training and deploying large NNs have become a bottleneck for further improvements. Sparsification has consequently emerged as a leading approach to tackle these issues. In this work, we consider a simple yet effective approach to sparsification, based on the Bridge, or $L_p$ regularization during training. We introduce a novel weight decay scheme, which generalizes the standard $L_2$ weight decay to any $p$ norm. We show that this scheme is compatible with adaptive optimizers, and avoids the gradient divergence associated with $0<p<1$ norms. We empirically demonstrate that it leads to highly sparse networks, while maintaining generalization performance comparable to standard $L_2$ regularization.
\end{abstract}

\section{Introduction}
\label{sec:intro}

Deep neural networks (NNs) have garnered unparalleled success across a variety of domains ranging from vision \citep{rn50} to language \citep{transformer,wavenet, wavernn}. Modern network performance has been shown to scale with both model complexity and dataset size, 
now operating in the jointly large parameter and large data size regime~\citep{deep-learning-scaling}. The resources required to train and deploy large NNs can, consequently, impose a bottleneck on further improvements~\citep{kaplan2020scaling}. 
For instance, Inception-V4~\citep{szegedy2016inceptionv4}, requires 16 billion arithmetic operations and 43 million parameters to be evaluated, while GPT-4~\citep{gpt4} requires over 1.75 trillion parameters (2 TiB assuming 16 bits per parameter).
Furthermore, training such models becomes increasingly expensive. Large language models (LLMs) already require supercomputers for training, with costs potentially reaching tens of millions of dollars per run, as cited in GPT-3~\citep{gpt-3}. Moreover, these models induce tremendous energy costs, as highlighted in the study on energy costs~\citep{energy_cost}. It is therefore critical to study {\it sparsification} during the training process as an avenue to manage resources during training and deployment~\citep{book_tibshirani}.

We define the sparsity of an NN as the fraction of its parameters that have a value of exactly zero. Higher sparsity therefore corresponds to fewer informative parameters, and thus, potentially, lower computational and storage requirements.
With zero valued weights, any multiplications (which dominate neural network computation) can be skipped, and sufficiently sparse models can be stored and transmitted compactly using sparse matrix formats. 
Sparse models are required to store more information per parameter relative to their denser counterparts. They may, therefore, be less prone to overfitting, and exhibit better generalization performance~\citep[e.g.][]{NIPS1989_6c9882bb,NIPS1992_303ed4c6,248452,hoefler2021sparsity}.
It has been empirically shown that deep NNs can perform effectively even with high levels of sparsity~\citep{lwac, exploring-sparsity-rnn, sws, gromov2024unreasonable}. This property is leveraged to reduce costs and enable the deployment of state-of-the-art models in resource-constrained environments \citep{fisher-pruning, wavernn, lpcnet}. In particular, modern GPU architectures like NVIDIA's Ampere, equipped with Sparse Tensor Cores, can leverage unstructured sparsity at levels as low as 50\% to achieve significant inference speedups \citep{mishra2021accelerating}. Additionally, recent research has demonstrated that applying sparsity in the fine-tuning of large language models can lead to substantial inference acceleration on both CPUs and GPUs, without compromising accuracy \citep{kurtic2023sparse}.

In recent years, various techniques for inducing sparsity in NNs have been proposed, including post-training pruning and dynamical regularization-based approaches~\citep{kwon2022a,lasby2024dynamic,yin2023dynamic}. 
Our work falls in the latter category, focusing in particular on weight regularization.
Weight regularization methods methods introduce a penalty term (regularizer) into the loss function to constrain the magnitude of each of the model parameters. 
This constraint implicitly biases the network towards sparser, rather than denser model representations and gradually reduces the magnitudes of the network weights during the training process.
Generally, regularization methods can be written as ${\cal L}'(\boldsymbol{x},\boldsymbol{w}) = {\cal L}(\boldsymbol{x},\boldsymbol{w}) + R(\boldsymbol{w})$. Here, ${\cal L}$ is the original loss function defined on the weights $\boldsymbol{w}$ and the data samples $\boldsymbol{x}$, while $R$ is the regularizer term which acts only on the weights.

The most common weight regularization method is $L_2$. 
While $L_2$ regularization achieves smaller weights and better generalization error at the end of the training process~\citep[e.g.][]{Plaut1986,Nowlan1992SimplifyingNN,NIPS1991_8eefcfdf,NIPS1991_d64a340b,NEURIPS2019_8744cf92}, it does not result in a sparse network representation. This is since the penalty term is 'rotationally invariant', meaning that it does not favor any particular direction in the weight space.
A ubiquitous regularization method which does result in sparse networks is $L_1$, or Lasso regularization~\cite{tibshirani1996regression}. Elastic-net regularization, which combines both $L_1$ and $L_2$ norms of the weights, was suggested as a method that exhibits both the sparsity of $L_1$ and the generalization performance of $L_2$~\cite{zou2005regularization}.

Towards a more general form of regularization, \citet{frank1993statistical} proposed Bridge regularization, or $L_p$ regularization, $R_p\sim\Lp{p}{\boldsymbol{w}}$, in which $p$ is chosen based on the problem at hand. {\it This is the underlying regularization method which we base our work upon}.
Bridge regression enjoys some desirable statistical properties, such as sparsity and near-unbiased estimates for $L_p$ norms in the range $p \in (0, 1)$ \citep{fan2001variable,Lp_quasi_norm}. 
Importantly, when $p < 1$, the so-called $p$-norm does not adhere to the triangle inequality. It does, however, satisfy the weaker condition $\|x + y\| \leq C\times(\|x\| + \|y\|)$ for some $C > 1$, which qualifies it as a quasi-norm. Additionally, in this range of $p < 1$, the quasi-norm is non-convex, making it more challenging to optimize~\citep{9ccd776b-aa6b-37a6-8f20-ee38e65b5919}. Previous works suggested variations on Bayesian sampling approaches to bypass these issues for $p \in (0,1)$~\citep{polson2014bayesian,loria2023suretuned}. Despite these complexities, for ease of notation, we refer to it as the $p$-norm throughout this paper.

In this work, we introduce a straightforward implementation of $L_p$ regularization. This method maintains the key benefits of $L_p$ regularization, such as sparsification and generalization, while {\it avoiding numerical instabilities caused by exploding gradients at the small weights limit}. Our approach integrates a single, simple step that complements any modern optimizer with minimal computational overhead. Furthermore, it can easily be adapted to more flexible regularization schemes, including variants of the Elastic Net.

Our main contributions are as follows:
\begin{itemize}
    \item 
    In \cref{sec:equivalence}, we illustrate how an $L_{p<2}$ regularized problem with $N$ parameters is equivalent to another optimization problem with $N$ additional auxiliary parameters. We show that the optimal solutions of both problems coincide, and for $p<2$ these solutions are expected to be sparse.
    \item 
    In \cref{sec:pwd}, we introduce our main contribution, the `$p$-norm Weight Decay' ($p$WD), a novel weight decay scheme for any $p$-norm regularization. We use a toy example to demonstrate that, across all $0 \leq p$ values, $p$WD avoids gradient instabilities and stabilizes training dynamics. We then present the $p$WD algorithm which implements this new weight decay method.
    \item 
    In \cref{sec:empirical}, we empirically assess the performance of $p$-norm Weight Decay ($p$WD) across various tasks and architectures, including comparisons with other sparsification methods. Our results show that $p$WD achieves high levels of sparsity while maintaining excellent network generalization.
    \item 
    In~\cref{sec:dynamical_pwd}, we discuss some limitations of $p$WD, suggest possible extensions, and propose future research directions.
\end{itemize}

\section{Related Work}
\label{sec:related}

\par{\bf Regularization and sparsification:} 
Besides linear regression, Bridge regularization has been applied to support vector machines \citep{liu2007support}, giving impressive results. As a special case of
Bridge regularization, $L_{1/2}$ has been shown to exhibit useful
statistical properties including sparseness and unbiasedness~\citep{xu20101}. 
Different training algorithms have been proposed for
training neural networks with $L_{1/2}$ weight penalty~\cite{fan2014convergence,yang2018l1}.
In terms of Bayesian estimation, Ridge and Lasso penalties imply a Gaussian and Laplacian prior on model weights, respectively. On the other hand, an $L_p$ penalty corresponds to the Generalized Gaussian prior on the model weights~\cite{frank1993statistical}.

\par{\bf Proximal operators:}
The proximal operator for the (lasso) regularization, known as the soft thresholding operator, is widely used in the literature~\cite{daubechies2003iterative}. The proximal operator for various other specific values of norms has also been studied in the literature, for example in \cite{l1over2,proximal_lp_small}, but result in cumbersome schemes. Partially for that reason, approximated operators were devised, for example in \cite{proximal_lp_inexact}.

\par{\bf Bridge regression:}
First suggested by~\cite{frank1993statistical}, Bridge regression, or $L_p$ regularization, has been studied extensively. 
It has been shown that Bridge regularization performs better than Ridge, Lasso and elastic-net in certain regression problems~\citep{park2011bridge}. 
In recent years, works such as \cite{polson2012bayesian, khan2018bridgeout} consider stochastic variations, while \citet{mcculloch2023sparse} integrate the concept of $L_p$ regularization for subset selection with constitutive NNs to obtain sparse networks, and \cite{doi:10.1080/03610926.2022.2028841} consider an adaptive re-weighting method.
Of special importance is~\cite{TOH2023119505}, which proposes an analytic solution for Bridge regression based on solving a penalized error formulation
using a proximal operator approach, closely in line with this work.

\section{Equivalent Formulation of $L_p$ Regularization}
\label{sec:equivalence}
Our starting point is the optimization problem of minimizing the empirical risk, or loss function ${\cal L}(\boldsymbol{w})$, with respect to the weights $\boldsymbol{w}\in\mathbb{R}^{N_w}$, where $N_w$ is the total number of weights (including biases), subject to an $L_p$ regularization term, $R_p(\boldsymbol{w})=(\lambda_p/p)\Lp{p}{\boldsymbol{w}}$, where $p>0, \lambda_p\in\mathbb{R}^+$ and $\|\cdot\|_p$ is the $p$-norm. 
In this section we introduce a higher dimensional dual optimization problem, where the loss is regularized instead by
\begin{align}
    \label{eq:reg_dual}
    R_p(\boldsymbol{w},\boldsymbol{s})=\frac{\lambda_p}{2}\sum_i\left[s_iw_i^2+K(s_i)\right],
\end{align}
where $w_i ,s_i \in \mathbb{R}, i=1,\ldots,N_w$. 
Here $K(s_i)$ is a function of $s_i$ only, chosen such that the two regularization terms satisfy the equality $R_p(\boldsymbol{w})=\min_{\boldsymbol{s}}R_p(\boldsymbol{w},\boldsymbol{s})$. Specifically, for $p\neq 2$, one suitable choice for $K(s_i)$ is given by
\begin{align}
    \label{eq:g}
    K(s_i)=\frac{2-p}{p}s_i^{p/(p-2)},
\end{align}  
under the restriction that $s_i>0$. 
In~\cref{app:equivalence}, we prove formally that the extended optimization problem is equivalent to the original one, in the sense that they share the same global and local minima.
By design, the minimum of the original optimization problem coincides with that of the extended one, namely,
\begin{align}
    \label{eq:opt}
    \min_{\boldsymbol{w}}{\cal L}(\boldsymbol{w})+\lambda_p R_p(\boldsymbol{w})=\min_{\boldsymbol{w},\boldsymbol{s}}{\cal L}(\boldsymbol{w})+\lambda_p R_p(\boldsymbol{w},\boldsymbol{s}).  
\end{align}

Before we move on, it is important to note that  $R_{p}(\boldsymbol{w},\boldsymbol{s})$ is non-convex\footnote{This can very easily be seen in the $w\to \infty$ limit, where $R_p$ is given by $\lambda_p s w^2$, independent of $K$, and the hessian has eigenvalues $\pm w$.}. Therefore, even for $p\geq 1$ the extended optimization problem is non-convex.

The regularizer $R_p(w,s)$ is what is known as a {\it biconvex function}~\cite{biconvex}. 
In simple terms, it is a convex function of $w$ for any fixed $s>0$ and vice versa. Biconvex functions can exhibit multiple local minima. Nevertheless, we refer the reader again to~\cref{app:equivalence} for a formal proof that both local and global minima of the original and extended optimization problems coincide.
Specifically, for $p<1$, the generalized loss  ${\cal L}(\boldsymbol{w})+\lambda_p R_p(\boldsymbol{w},\boldsymbol{s})$ will exhibit local minima at $w_i=0$, where the global minimum of $R_p(w_i,s_i)$ is located.
This is to be expected since it is also the case in the original formulation of the $p<1$ norm. 

\section{$p$-norm Weight Decay}
\label{sec:pwd}

Having established that the empirical risk minimization problem with $L_p$ regularization can be expressed as a biconvex optimization problem, we now turn to an important implication of this formulation, which is based on the Alternate Convex Search (ACS) algorithm. 
Alternate Convex Search is a common strategy for optimizing biconvex functions, in which we alternate between optimizing with respect to one variable while keeping the other fixed. In each step, standard convex optimization techniques can be used, and the problem is guaranteed to converge to a (possibly local) minimum~\cite{biconvex}. Building upon this approach, in this section we derive a weight decay step, analogous to traditional $L_2$ weight decay, but extended to any $p$ norm.  We refer to this method as $p$-norm Weight Decay ($p$WD).

\subsection{Proximal Operator Representation}
\label{sec:proximal}
As mentioned above, we will follow the ACS approach, where we optimize over $\boldsymbol{w}$ and $\boldsymbol{s}$ sequentially. In each $\boldsymbol{s}$ update step we set it to its optimal value, 
\begin{align}
    \label{eq:s_opt}
    \boldsymbol{s}_n=\vert \boldsymbol{w}_n\vert^{p-2},
\end{align}
where the absolute value is taken element-wise, and the subscript $n$ denotes the $n$-th iteration. Since we are dealing with sparsity inducing regularization, we expect some weights to vanish. For $p<2$, we see that the $s_i$'s corresponding to vanishing weights will diverge.

Moving now to optimize over $\boldsymbol{w}$, following the ACS approach we hold $\boldsymbol{s}$ fixed at last value $\boldsymbol{s}_n$. Note that $K(\boldsymbol{s}_n)$ is also fixed and does not impact the optimization with respect to $\boldsymbol{w}$. Therefore, we can effectively optimize ${\cal L}+(\lambda_p/2)\sum_i s_i w_i^2$. This second term is seemingly a standard $L_2$ regularization term, however,  the possible divergence of some of the $s_i$'s calls for a subtle treatment

From this point on, we shall focus on gradient based optimization methods, as they are most commonly used when training NNs. 
Taking any gradient based approach, we will have to include the gradient of $R_p$ with respect to $\boldsymbol{w}$, which is given by
\begin{align}
    \label{eq:grad_w}
    \boldsymbol{\nabla}_{\boldsymbol{w}}R_p(\boldsymbol{w},\boldsymbol{s}_n)=\lambda_p\boldsymbol{s}_n\circ\boldsymbol{w}= \lambda_p\vert \boldsymbol{w}_n\vert^{p-2}\circ\boldsymbol{w},
\end{align}
where ``$\circ$'' denotes element-wise multiplication. For~$p<1$, this gradient will become very large for small weights, rendering any finite learning rate approach unstable. Taking instead a decoupled weight decay approach~\citep{loshchilov2019decoupled}, we are tempted to write the $\boldsymbol{w}$ update step as
\begin{align}
    \label{eq:decoupled}
    \boldsymbol{w}\gets\left(\boldsymbol{w}-\alpha\delta\boldsymbol{w}\right)\circ(\boldsymbol{1}-\alpha\lambda_p\boldsymbol{s}_n),
\end{align}
where $\delta\boldsymbol{w}$ is either the gradient of $\cal L$, or any other adaptive step based on the unregularized loss ${\cal L}$, and $\alpha \in \mathbb{R}^+$ is the learning rate. This update rule, however, still does not ensure stability; weights that the regularization term drives to zero will be multiplied by a divergent negative weight decay factor, giving rise to an oscillatory behavior around 0. 

To overcome this instability, we propose to use the proximal operator of $R_p(\boldsymbol{w},\boldsymbol{s}_n)$ with respect to $\boldsymbol{w}$ at fixed $\boldsymbol{s}_n$. We review the basics of proximal operators in~\cref{app:proximal}, where we also derive the proximal gradient step for $R_p$,
\begin{align}
    \label{eq:proximal}
    \boxed{\boldsymbol{w}\gets\frac{\boldsymbol{w}-\alpha\delta\boldsymbol{w}}{\boldsymbol{1}+\alpha\lambda_p\boldsymbol{s}_n}= \frac{\boldsymbol{w}-\alpha\delta\boldsymbol{w}}{\boldsymbol{1}+\alpha\lambda_p\vert \boldsymbol{w}_n\vert^{p-2}}\,.\,}
\end{align}
In the equation above, division is carried out element-wise. For $\alpha\lambda_p{s}_n\ll 1$, this is equivalent to the decoupled weight decay step in~\cref{eq:decoupled}. However, here, the numerator is always larger than 1, driving the weights to $0$ for~$\alpha\lambda_p{s}_n\gg1$,~as desired. As seen in~\cref{eq:proximal}, $w_{n,i}=0$ is a fixed point of the proximal gradient step for all $p<2$.
The stability of this fixed point is discussed in~\cref{app:stability}, where we find it to be stable only for $p<1$, similar to the original problem.

\cref{eq:proximal} is the main result of this work, and we refer to this method as {\it $p$-norm Weight Decay ($p$WD)}.

\subsection{Toy Example}
\label{sec:example}
\begin{figure}[t]
    \vskip 0.2in
    \begin{center}
    \centerline{\includegraphics[width=0.5\textwidth]{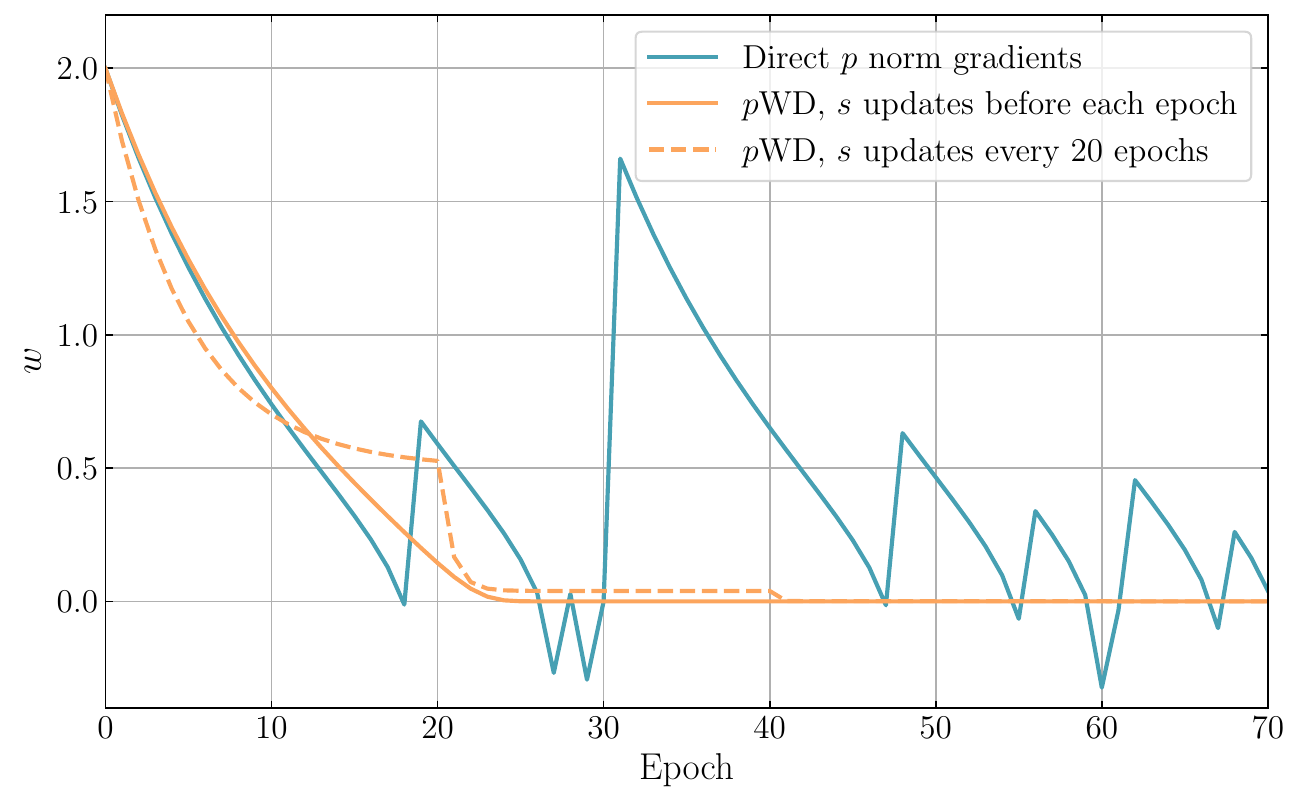}}
    \caption{Toy example of weight evolution under gradient descent for the loss ${\cal L}=(w-1)^2/2+\Lp{p}{w}/p$. \textbf{Dotted line:} represents simple gradient descent where the norm is added directly to the gradient. The weight fails to converge to 0 due to the exploding gradient of the $p$-norm near 0. The \textbf{dashed line} represents the evolution of the weight under the update rule in~\cref{eq:proximal}, where we update $s$ every 20 $w$ steps.  The \textbf{solid line} represents the evolution of the weight under the update rule in~\cref{eq:proximal}, where we update $s$ at every $w$ step.
    The latter is an implementation of {\it $p$-norm Weight Decay ($p$WD)}.
    % We refer to the latter method as {\it $p$-norm Weight Decay ($p$WD)}. 
    We see that in both implementations of our method, the weight converges smoothly to 0.}
    \label{fig:toy_example}
    \end{center}
    \vskip -0.2in
\end{figure}
To demonstrate the challenges of $L_p$ regularization, and the benefits of our approach, we start with a simple toy example. 
Consider the single variable regularized loss function 
\begin{align}
\label{eq:toy_loss}
    {\cal L}(w)=
    \frac{1}{2}(w-1)^2+ 
    \frac{\lambda_p}{p}\vert w\vert^p,
    \qquad
    w\in\mathbb{R}.
\end{align}
For any $\lambda_p>0$ and $p>0$, the minimum of ${\cal L}(w)$ lies on the segment $w\in[0,1]$. For $\lambda_p\geq1$ and $p\leq1$, the minimum is found at $w=0$. 
For simplicity, we consider simple gradient descent as the update rule for $w$, leading to the equation
\begin{align}
    \label{eq:grad}
    w_{t+1}=w_t-\alpha \left[w_t\left(1+\lambda_p\vert w_t\vert^{p-2}\right) -1\right].
\end{align}
For $p<1$, and $\lambda_p>1$, the regularized loss gradient will drive the weight to $w=0$, leading to a divergent gradient. 
We demonstrate this in~\cref{fig:toy_example}, where the evolution of the weight under gradient descent for $p=0.6$, $\lambda_p=1$, and a learning rate $\alpha=0.1$ is shown as a cyan line. The weight starts flowing towards $0$, until reaching a point where the $p$-norm gradient becomes too large and the weight changes sign, leading to an oscillatory behavior around $0$. 
On the contrary, the orange lines represent the evolution of the weight under the update rule in~\cref{eq:proximal}. For the solid line we update $s_n=\vert w\vert^{-1.4}$ before each $w_n$ step, which results in a smooth convergence to $0$. 
For the dashed line we initialize $s_0=0.1$ and update $s_n=\vert w\vert^{-1.4}$ once every 20 $w$ steps. We see that the weight converges smoothly to $0$ in a step-wise pattern, without any oscillations.
In the remainder of this work, we adopt the smoothest approach and update $s$ at every $w$ step.

\subsection{The $p$WD Algorithm}
\label{sec:pwd_alg}

\begin{algorithm}
    \caption{Gradient Based $p$WD}
    \label{alg:pwd}
    \begin{algorithmic}[1]
        \STATE
        {\bf given}
        initial learning rate $\alpha \in \mathbb{R}^+$, weight decay regularization factor $\lambda_p \in \mathbb{R}^+$ weight norm number $p\in\mathbb{R}^+$, gradient-based optimization algorithm and its hyper-parameters 
            \STATE 
        {\bf initialize}
        time step $t \gets 0$, parameter vector $\boldsymbol{w}_{t=0} \in \mathbb{R}^n$, schedule multiplier $\eta \in \mathbb{R}^+$
        \REPEAT
            \STATE $t \gets t + 1$
            \STATE $g_t\gets \boldsymbol{\nabla} \mathcal{L}(\boldsymbol{w}_{t-1})$
            \STATE $\delta\boldsymbol{w}_t\gets$OptimizerWeightUpdate($g_t,t$)
            \STATE $\eta_t \gets $ SetScheduleMultiplier($t$) 
            \STATE $\tilde{\boldsymbol{w}}_t \gets\boldsymbol{w}_{t-1} - \eta_{t}\alpha\delta\boldsymbol{w}_t$
            \STATE
            The $p$WD Step:  
            \\
            $$
            \boxed{\boldsymbol{w} \gets
            \frac{\vert \boldsymbol{w}_{t-1}\vert^{2-p}}{\vert \boldsymbol{w}_{t-1}\vert^{2-p}+\eta_t \alpha\lambda_p}\tilde{\boldsymbol{w}}_{t}}$$
        \UNTIL{stopping criterion is met}
        \\
        % \!\!\!\!\!\!\!\!\!
        \STATE \textbf{return} optimized parameters $\boldsymbol{w}_t$
    \end{algorithmic}
    \end{algorithm}

Based on the discussion above, we can now present our proposed $L_p$ weight decay algorithm,~\cref{alg:pwd}. To limit the scope of this work, we will focus only on the case where $\boldsymbol{s}$ is updated at every $\boldsymbol{w}$ step. We note, however, that the algorithm can be easily modified to update $\boldsymbol{s}$ every $n$ $\boldsymbol{w}$ steps, where $n$ is a hyper-parameter. This modification, along with few other variants, are discussed in~\cref{sec:dynamical_pwd}. In~\cref{app:padam_code}, we provide an example \texttt{PyTorch} implementation of $p$WD based on the Adam optimizer.

Lines~$1-8$ of~\cref{alg:pwd} are the usual steps for any gradient based optimizer, such as SGD~\citep{sgd}, Adam~\citep{kingma2014adam}, RMSprop~\citep{tieleman2012lecture} etc., encapsulated by the function OptimizerWeightUpdate$(t,g_t)$, which may include momentum or higher moments. We have also explicitly included learning rate scheduling. 
The novelty appears at Line 9, where we impose the $p$WD weight decay step. This weight decay step is the same as the one in~\cref{eq:proximal}, assuming the auxiliary parameters $\boldsymbol{s}$ are set before every $\boldsymbol{w}$ step.

In~\cref{app:non_increasing}, we prove that gradient descent with the $p$WD step guarantees that the original loss function is non-increasing.
We note that Lines 8 and 9 are split into two steps for clarity of presentation, but in practice can be carried out simultaneously to avoid the memory overhead of storing $\tilde{\theta}_t$. 
As discussed in~\cref{sec:proximal,app:stability}, the $p$WD step has a fixed point at $w_i=0$ for all $p<2$. \textbf{Regardless of the stability of this fixed point, it is important to stress that a parameter initialized at $w_i=0$ will remain fixed to this value during training.} In the experiments presented in~\cref{sec:empirical}, we therefore avoid decaying parameter tensors that are initialized at $0$, such as biases and batch normalization parameters.

\section{Sparsity with $p$WD in Realistic Settings}
\label{sec:empirical}
\begin{figure*}[t]
    \vskip 0.2in
    \begin{center}
    \centerline{\includegraphics[width=\textwidth]{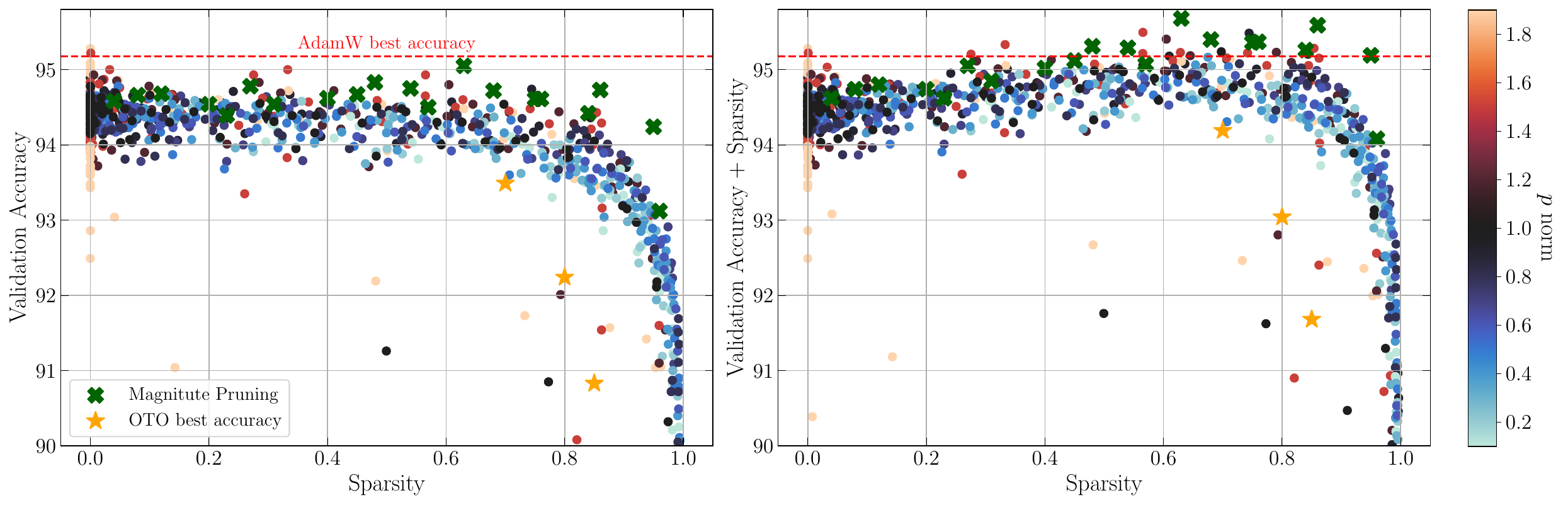}}
    \caption{
        Validation accuracy vs. sparsity for ResNet18 trained on CIFAR-10. Each point represents a different instance of the network trained for 100 epochs, with a different choice of $p$, $\lambda_p$, and learning rate $\alpha$. Points of different colors indicate different choices of $p$, optimizing over $\lambda_p, \alpha$. The {\bf dashed-red line} indicate the best accuracy achieved using AdamW. The {\bf orange stars} indicate the best accuracy runs obtained using {\it Only Train Once}~\citep[{\it OTO}\text{,}][]{OTO}. The {\bf green crosses} indicate the best accuracies obtained using {\it iterative magnitude pruning}.  \textbf{Left:} Validation accuracy vs. sparsity. \textbf{Right:} Example of the accuracy/sparsity trade-off given in~\cref{eq:tradeoff}.
    }
    \label{fig:accurecy_vs_sparsity}
    \end{center}
    \vskip -0.2in
\end{figure*}

\begin{figure*}[t]
    \vskip 0.2in
    \begin{center}
    \centerline{\includegraphics[width=\textwidth]{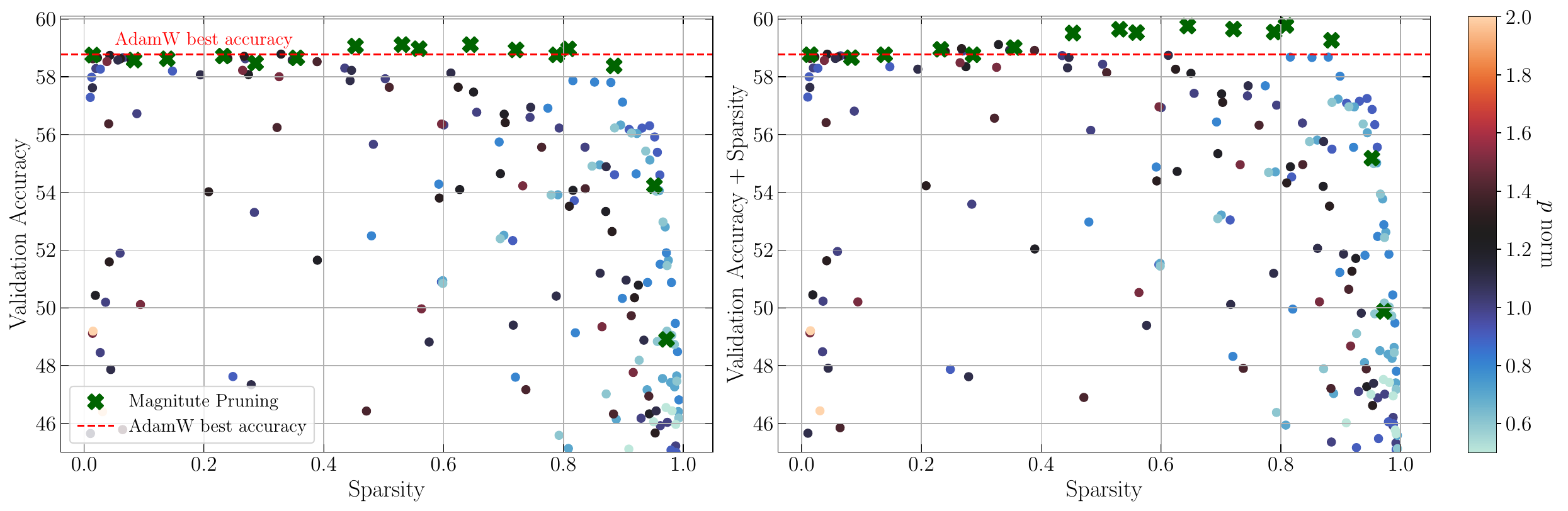}}
    \caption{
        Validation accuracy vs. sparsity for nanoGPT trained on Tiny Shakespeare. Each point represents a different instance of the network trained for 5000 iterations, with a different choice of $p$, $\lambda_p$, and learning rate $\alpha$. Points of different colors indicate different choices of $p$, optimizing over $\lambda_p, \alpha$. The dashed-red line indicates the best accuracy achieved using AdamW. The {\bf green crosses} indicate the best accuracies obtained using {\it iterative magnitude pruning}. \textbf{Left:} Validation accuracy vs. sparsity. \textbf{Right:} Example of the accuracy/sparsity trade-off given in~\cref{eq:tradeoff}.
    }
    \label{fig:accurecy_vs_sparsity_gpt}
    \end{center}
    \vskip -0.2in
\end{figure*}
In this section, we empirically test the performance of our $p$WD scheme. We use Adam~\cite{kingma2014adam} as our base optimizer, and supplement it with the proposed $p$WD step. 
We refer to this optimizer as $p$Adam. 
A simple \texttt{PyTorch} implementation is described in~\cref{app:padam_code}. Throughout this work, we keep the Adam hyper-parameters fixed at their default values. For all our experiments, we adopted a learning rate schedule that combines a linear warm-up phase with a subsequent cosine annealing. The precise experimental details, architectures, and hyper-parameters are given in~\cref{app:experimental_details}. 

The experiments were conducted on two standard datasets: CIFAR-10~\cite{cifar10} and Tiny Shakespeare~\cite{shake}. We employed two deep learning models, for vision and text, namely ResNet18~\cite{he2015deep}
and NanoGPT~\cite{2023-karpathy}; a character-level language based on GPT2~\cite{radford2019language}. We trained both architectures with a cross-entropy loss, and used sparsity and accuracy as our main validation matrices\footnote{While accuracy might not be the first choice for token level language model practitioners, we find it suitable and intuitive for our character level language model experiments.}.
We trained our models using  the $p$Adam optimizer for a range of $p$ values, and using AdamW~\cite{loshchilov2019decoupled} as a point of reference for performance. 
For each optimizer and each $p$ value choice, we scanned over both the learning rate and the weight decay $\lambda_p$. Importantly, we did not decay one dimensional parameter tensors, such as biases and batch normalization parameters. Such parameters are commonly initialized at $0$, which, as mentioned above, will remain there during training under $p$WD. At the same time, they constitute a small fraction of the total number of parameters, and thus do not significantly affect the overall sparsity level.

\subsection{Sparsity and Performance for $p$WD}
The main results of this paper are shown in~\cref{fig:accurecy_vs_sparsity,fig:accurecy_vs_sparsity_gpt}, where we show the validation accuracy against sparsity, with sparsity defined as the fraction of weights smaller than $10^{-13}$. It is evident that $p$Adam finds sparse network representations, with accuracy level gradually decreasing as sparsity increases. For ResNet18 trained on CIFAR10, we observe that models with sparsity as high as $99.5\%$ have achieved accuracy higher than $90\%$, significantly higher than random guessing, which gives 10\% accuracy.  The highest sparsity level we find for accuracy drop of less than $2\%$ relative to the best AdamW accuracy was $94.4\%$ for ResNet18, and $89.9\%$ for NanoGPT.

Different colors in~\cref{fig:accurecy_vs_sparsity,fig:accurecy_vs_sparsity_gpt} represent different $p$ norms. 
We find that the sparsest networks are obtained at values of $p<1$ while the best generalizing networks are found for $1<p<2$. See discussion in~\cref{sec:dynamical_pwd} for possible explanations and improvement strategies. 
These results imply that for a given dataset and architecture, an optimal $p$ may be found, under a choice of accuracy/sparsity trade-off. In the right panel of~\cref{fig:accurecy_vs_sparsity}, we show a concrete realization of this trade-off, whereby a loss of $1\%$ in accuracy is equal to a $100\%$ increase in sparsity, defined by
\begin{align}
    \label{eq:tradeoff}
    \text{Trade-off~Metric}=
    \mathrm{Val.~Acc.}[\%] + \mathrm{Sparsity}.
\end{align}
For ResNet18, the validation+Sparsity scatter exhibit a clear peak at around a sparsity of 80\%, indicating on the optimal $p\simeq 1.2$ under such trade-off. The nanoGPT results are less conclusive, and in general less well behaved, but still the sparsity+accuracy scatter seem to be roughly constant up to sparsity of $\sim90\%$, achieved by $p\geq0.8$

\subsection{Comparison with Other Methods}
\begin{table}[t]
    \caption{Accuracies above a given sparsity level for ResNet-18 on CIFAR-10.  Comparison of $p$WD to different sparsification methods.
    }
    \label{tab:results_resnet}
    \vskip 0.15in
    \begin{center}
    \begin{small}
    \begin{sc}
    \begin{tabular}{lcccc}
    \toprule
    sparsity    & 0\%       & 70\%      & 80\%      & 90\% \\
    \midrule
    MP          & 95.18\%   & 94.73\%   & 94.73\%   & 94.24\%\\
    OTO         & 95.18\%   & 93.49\%   & 92.24\%   & 87.82\%\\
    $p$WD       & 95.28\%   & 94.74\%   & 94.43\%   & 93.79\%\\
    \bottomrule
    \end{tabular}
    \end{sc}
    \end{small}
    \end{center}
    \vskip -0.1in
\end{table}
\label{sec:comparison}
Next, we compare the performance of $p$WD to other sparsification methods. We focus on two methods for sparsificaiton during training. The first is the {\it Only Train Once}~\citep[{\it OTO}\text{,}][]{OTO} method, which we apply only to the ResNet18 experiments. The second, which we apply to both ResNet18 and nanoGPT,  is a simple iterative magnitude pruning (MP) method. 
There, we simply set to 0 all weights smaller than a certain threshold, once every fixed number of iterations. For both MP and OTO, we use AdamW as the base optimizer, and scan over the learning rate, weight decay, and one pruning hyper-parameter (pruning threshold for MP and pruning fraction for OTO). For both methods we start pruning after $10\%$ of the training epochs, and increase the pruning threshold/fraction for the next $10\%$ of the epochs. The results are shown in~\cref{fig:accurecy_vs_sparsity} and in~\cref{tab:results_resnet,tab:results_gpt}. For ResNet18, we find that $p$WD outperforms OTO, but is inferior to MP until a sparsity of about $90\%$, where $p$WD retains higher accuracy. For nanoGPT, we find, much like for ResNet18, that MP outperforms $p$WD until a sparsity of about $95\%$, where $p$WD performs slightly better.

Overall, our findings indicate that $p$WD achieves results comparable to MP and surpasses the performance of OTO. It's important to note, however, that the MP approach involved an additional advantage: the use of AdamW combined with iterative setting of small weights to zero. This approach is akin to the Elastic Net~\citep{zou2005regularization}, where sparsity is induced by the $L_1$ term and optimization stabilization and generalization are aided by the $L_2$ term. In contrast, $p$WD employs a single regularization term, with the parameter $p$ effectively balancing sparsity and generalization performance\footnote{Since our goal was to highlight the utility of $p$WD alone, we did not combine it with other sparsification methods, but this is trivially done. In particular, combining iterative magnitude pruning and $p$WD is as simple as performing MP with regular WD.}.
In the subsequent section, we will explore how $p$WD can be elegantly extended to simultaneously enhance both these aspects.
As a simple demonstration of the potential of an extended $p$WD, we ran an additional nanoGPT experiment with $p=0.8$, this time using AdamW instead of Adam. We fixed the $L_2$ weight decay of $2\times 10^{-3}$ and scanned over the learning rate and the $p$WD weight decay $\lambda_{0.8}$. For sparsity of 90\% we obtained an accuracy of 57.15\%. Improving over the $p$Adam result and surpassing the MP result.

\begin{table}[t]
    \caption{Accuracies above a given sparsity level for nanoGPT on Tiny Shakespeare.  Comparison of $p$WD to different sparsification methods.}
    \label{tab:results_gpt}
    \vskip 0.15in
    \begin{center}
    \begin{small}
    \begin{sc}
    \begin{tabular}{lcccc}
    \toprule
    sparsity    & 0\%           & 80\%      & 90\%      & 95\% \\
    \midrule
    MP       & 59.11\%       & 58.97\%   & 57.07\%   & 54.23\%\\
    $p$WD       & 58.79\%       & 57.93\%   & 56.49\%   & 55.92\%\\
    \bottomrule
    \end{tabular}
    \end{sc}
    \end{small}
    \end{center}
    \vskip -0.1in
\end{table}

\section{Limitations of $p$WD and Possible Variations}
\label{sec:dynamical_pwd}
This paper, with the goal of establishing $p$WD as a viable $L_p$ regularization method, was focused on a specific implementation of $p$WD. In this implementation, $\boldsymbol{s}$ is updated to its optimal value, given in~\cref{eq:s_opt}, at every $\boldsymbol{w}$ update step. 
We identify two aspects of $p$WD that call for further investigation: The first is the existence of a fixed point at $w_i=0$ for all $p<2$, which is also a local minimum for $p<1$. The second aspect is the generalization performance of the resulting networks. In this section we discuss these two aspects, and propose possible variations of $p$WD that might improve the performance of the resulting networks. The common theme of these variations is that they all involve richer dynamics, which  comes with a price of increased complexity, and an increased number of hyper-parameters.

\subsection{Avoiding the $w_i=0$ Fixed Point}

The $w_i=0$ fixed point arises due to the large denominator in~\cref{eq:proximal} whenever $w_i\to0$. The disadvantage of this fixed point is especially important for parameters initialized at $w_i=0$, which will remain frozen during training. At the same time, the existence of this fixed point is crucial for the algorithm to converge to the sparse solutions it was designed to find. 
A successful algorithm will therefore avoid getting stuck at the $w_i=0$ local minimum, while still converging to $w_i=0$ when appropriate. Below we discuss a few possible approaches to achieve this goal.

\par{\bf $\boldsymbol{s}$ dynamics:} In the proposed $p$WD~\cref{alg:pwd}, $\boldsymbol{s}$ in~\cref{eq:proximal} is updated to its optimal value before every $\boldsymbol{w}$ step. Here, we suggest promoting $\boldsymbol{s}$ to a learnable parameter. From that perspective,  currently at each epoch $\boldsymbol{s}$ is updated according to~\cref{eq:s_opt}. Therefore, a weight initialized at $w_i=0$ will force $s_i\to\infty$, which will in turn force $w_i$ to remain at $0$. 
From the perspective of a dynamical system, we can say that $\boldsymbol{s}$ is a 'fast' variable, which reaches optimal value before the 'slow' variable, $\boldsymbol{w}$, has had enough time to update its value. By `slowing down' $\boldsymbol{s}$, we can avoid the $w=0$ fixed point. We can initialize $\boldsymbol{s}=\boldsymbol{1}$, which means that the network starts evolving under a standard weight decay. Then, during training we can let $\boldsymbol{s}$ evolve in a desired pace towards its optimal value. This can be done either by updating $\boldsymbol{s}$ every $n$ steps of $\boldsymbol{w}$ updates, or by applying an SGD-like update rule to $\boldsymbol{s}$. In this case, if a weight passes through $w_i=0$, it will be able to continue evolving as long as $s_i$ is not too large. We note that while this approach does come with a small memory overhead. The computational overhead is negligible as the gradients of $\boldsymbol{s}$ are trivial to compute and implement.

\par{\bf $p$ scheduling:} It is clear from~\cref{eq:proximal} that for $p=2$ there is no fixed point at $w_i=0$ and we revert to the standard weight decay scheme. 
Having the network start training with $p=2$ for a few epochs, and then gradually decreasing $p$ towards a desired $p<2$ value, would allow the network to avoid the $w_i=0$ fixed point at initialization. At the same time, the network will still be able to converge to $w_i=0$ at later stages as $p$ decreases. Further, restarting $p$ to $p=2$ and decreasing to a smaller value repeatedly, would allow the network to explore the parameter region around the $w_i=0$ fixed point, and possibly escape it. 

\subsection{Generalization Performance:} One important observation from~\cref{fig:accurecy_vs_sparsity} is that the best generalizing networks are found for $1<p<2$, while the most sparse networks are found for $p<1$. One possible explanation is that $p>1$ norms, such as AdamW, incur larger penalties on larger weights than smaller ones, in contrast to $p<1$. The importance of regularizing large weights is well known~\citep[for example][]{loshchilov2019decoupled}, and perhaps $1<p<2$ provide a better balance between the two. This hypothesis is also supported by the performance of MP with AdamW, which is essentially a combination of $p=2$ and $p=0$ penalties. 

Both $p$ scheduling and $\boldsymbol{s}$ dynamics, discussed above, can potentially achieve as similar effect even when $p<1$. 
For example, consider the case of $p$ scheduling with restarts. Upon each restart, larger weights will be penalized more harshly, while the smaller ones will instead be regularized towards the end of a cycle when $p$ is small. 
Alternatively, if we adopt slow $\boldsymbol{s}$ dynamics, large weights will be penalized until $s$ reaches its optimal value (which is small for large weights).

\par{\bf Elastic Weight Decay:} Another possible approach is to use a variation of elastic net proposed in~\cite{zou2005regularization}. In the original elastic net, the loss is regularized by a combination of $L_1$ and $L_2$ norms. Supposedly, this combination allows the network to benefit from the sparsity inducing properties of $L_1$ regularization, while still benefiting from the stability of $L_2$ regularization. Repeating the steps leading to~\cref{eq:proximal}, we can in principle add both an $L_1$ and an $L_2$ norm and  achieve the following elastic net weight decay step
\begin{align}
    \label{eq:elastic}
    \boldsymbol{w}\gets\frac{\boldsymbol{w}-\alpha\delta\boldsymbol{w}}{\boldsymbol{1}+\alpha\left(\lambda_1\vert\boldsymbol{w}\vert^{-1}+\lambda_2\right)}\,.
\end{align}
Moreover, the flexibility of our proposed $p$WD allows us to generalize the elastic net approach to any combination of $L_{p<2}$ norms, $\sum_p\lambda_p\Lp{p}{\boldsymbol{w}}$. In which case, the weight decay step becomes
\begin{align}
    \label{eq:elastic_general}
    \boldsymbol{w}\gets\frac{\boldsymbol{w}-\alpha\delta\boldsymbol{w}}{\boldsymbol{1}+\alpha\sum_p\lambda_p\vert\boldsymbol{w}\vert^{p-2}}\,.
\end{align}
While our derivation of~\cref{eq:proximal} relied on the specific construction as presented in~\cref{eq:reg_dual,eq:g}, which is valid only for $p<2$, we see no reason why~\cref{eq:elastic_general} should not be valid for any $p>0$\footnote{In the case of simple SGD, the $w\neq0$ fixed point of $p$WD is given by the $\boldsymbol{w}$ solving  $0=\nabla{\cal L}(\boldsymbol{w})+\lambda_p|\boldsymbol{w}|^{p-2}\circ\boldsymbol{w}$. This is precisely the equation for the minimum of ${\cal L}(\boldsymbol{w})+(\lambda_p/p)\Lp{p}{\boldsymbol{w}}$, assuming ${\cal L}$ is convex, regardless of the value of $p$.}.
In principle, optimizing with a combination of one $p<1$ norm and one $p>1$ norm, might provide a better balance between the two, and improve the generalization performance of the resulting network. 
As mentioned in the previous section, a verification of this prediction was tested on nanoGPT which essentially combined $p=2$ and $p=0.8$ weight decay. The results were superior to both MP and a single $p$ implementation of $p$WD. 

\section{Conclusions}
\label{sec:conclusions}

In this work, building upon the works of \citet{frank1993statistical} and \citet{loshchilov2019decoupled}, we developed a novel regularization-based sparsification scheme, which we dubbed {\it $p$-norm Weight Decay}. Our method, a proximal approximation of $L_p$ regularization, dynamically drives weights to zero during training within a stable optimization framework. $p$WD is as simple to implement as any standard optimizer. It operates as a supplemental weight decay step and is, therefore, compatible with any modern optimizer. Additionally, it incurs negligible memory and computational overhead. Our ultimate goal is to incorporate $p$WD into popular deep learning frameworks such \texttt{PyTorch} and \texttt{Tensorflow}, therby to rendering sparse training as straightforward as using any modern optimizer.

Through empirical evaluation, we demonstrate that our approach enables performance gains and high levels of sparsity. Specifically, we are able to prune ResNet and NanoGPT models to extremely sparse configurations while retaining high accuracy. Our results clearly demonstrate that $p$WD provides an effective approach for network sparsification, competing with state-of-the-art methods in terms of maintaining accuracy while achieving highly sparse networks. 

This work is, however, only a first step towards unveiling the full potential of $p$WD. Iterative magnitude pruning, for example, outperformed some aspects of $p$WD in the experiments presented in~\cref{sec:empirical}. As discussed in~\cref{sec:dynamical_pwd}, we believe that the performance of $p$WD can be further improved by incorporating richer dynamics. We leave the thorough exploration of these dynamics to future work.

Going beyond the scope of NNs, $p$WD is essentially a noval gradient based approximated optimization approach for $p\leq 2$ norms. As such $p$WD can be implemented on many problems other than machine learning. One example may lie in Variational Quantum Circuits~\cite{Cerezo_2021},  where decreasing the number of parameters is desirable. 

In conclusion, the straightforward implementation, flexibility, and potentially adaptive nature of $p$WD, have promise to stimulate new areas of investigation into optimizing neural networks and automated architecture design.

\section*{Impact Statement}
The method suggested in this paper simplifies sparsification in neural networks training. Thereby, potentially making machine learning more efficient and accessible in environments with limited resources. By reducing energy and computational demands, our approach could have a wider impact, facilitating sustainable AI technology use across various sectors.

% Acknowledgements should only appear in the accepted version.
\section*{Acknowledgements}
We would like to thank Andrey Gromov for useful discussions and Ioannis Mavrothalassitis for assisting us to derive some of the convergence proofs. NJO acknowledges support from the National Science Foundation under the grant No. PHY-1915314. NJO further thanks B. Nachman for computing resources. NL would like to thank G-Research for the award of a research grant, as well as the CERN-TH department for their hospitality during various stages of this work.

\bibliography{LPbib}
\bibliographystyle{icml2024}

%%%%%%%%%%%%%%%%%%%%%%%%%%%%%%%%%%%%%%%%%%%%%%%%%%%%%%%%%%%%%%%%%%%%%%%%%%%%%%%
%%%%%%%%%%%%%%%%%%%%%%%%%%%%%%%%%%%%%%%%%%%%%%%%%%%%%%%%%%%%%%%%%%%%%%%%%%%%%%%
% APPENDIX
%%%%%%%%%%%%%%%%%%%%%%%%%%%%%%%%%%%%%%%%%%%%%%%%%%%%%%%%%%%%%%%%%%%%%%%%%%%%%%%
%%%%%%%%%%%%%%%%%%%%%%%%%%%%%%%%%%%%%%%%%%%%%%%%%%%%%%%%%%%%%%%%%%%%%%%%%%%%%%%
% \newpage
\appendix
\onecolumn

\section{Proof of the Extended Problem Equivalence}
\label{app:equivalence}
In this appendix, we prove the equivalence between the original and the extended optimization problems. First, we show that $R_p(\boldsymbol{w},\boldsymbol{s})$ in \cref{eq:reg_dual} with $K$ as in \cref{eq:g} satisfies $R_p(\boldsymbol{w})=\min_{\boldsymbol{s}}R_p(\boldsymbol{w},\boldsymbol{s})$, provided $s_i>0$ and $0<p<2$. We want to show that 
\begin{align}
    \label{eq:equivalence}
    R_p(\boldsymbol{w})=\frac{\lambda_p}{p}\sum_i \vert w_i\vert^p=\min_{\boldsymbol{s}>0}\frac{\lambda_p}{2}\sum_i\left[w_i^2s_i+\frac{2-p}{p}s^{p/(p-2)}\right]=\min_{\boldsymbol{s}>0}R_p(\boldsymbol{w},\boldsymbol{s})\,.
\end{align}
It is enough to show for a single component since the problem is separable. By taking second partial derivatives, the bi-convexity of $R_p(w,s>0)$ is established. The minimum of $R_p(w,s>0)$ at fixed $w$ is therefore unique and can be found by setting $\partial R_p(w,s>0)/\partial s=0$. This gives
\begin{align}
    \label{eq:s_opt_der}
    \frac{\partial R_p(w,s_*)}{\partial s_*}=\frac{\lambda_p}{2}\left[w^2-s_*^{2/(p-2)}\right]=0\;\;\Rightarrow\;\; s_*=\vert w\vert^{p-2}\;\;\Rightarrow\;\; R_p(w,s_*)=\frac{\lambda_p}{p}\vert w\vert^p\,.
\end{align}
This shows that the minimum of $R_p(w,s>0)$ is indeed $R_p(w)$, and therefore the equivalence in \cref{eq:equivalence} holds. For future reference, we note that the minimum of $R_p(w,s>0)$ at fixed $w$ is unique and a continuous function of $w$.

The equivalence of the optimization problem is stated in the following theorem.

\begin{theorem}
\label{thm:equivalence}
    Let ${\cal L}:\;\mathbb{R}^n\to\mathbb{R}$, let $R:\;\mathbb{R}^n\times\mathbb{R}^n\to\mathbb{R}$ be a smooth bi-convex function. Define $F(\boldsymbol{w},\boldsymbol{s})={\cal L}(\boldsymbol{w})+R(\boldsymbol{w},\boldsymbol{s})$, and $\hat{s}(\cdot)=\underset{\boldsymbol{s}}{\operatorname{argmin}}\;R(\cdot,\boldsymbol{s})$. Assuming $\hat{s}$ is continuous, the following holds:
    \item[(i)] Let $(\boldsymbol{w}^*, \boldsymbol{s}^*)$ be a local minimum of $F(\boldsymbol{w},\boldsymbol{s})$. Then, $\boldsymbol{w}^*$ is a local minimum of $F(\boldsymbol{w},\hat{s}(\boldsymbol{w}))$.
    \item[(ii)] Let $\boldsymbol{w}^*$ be a local minimum of $F(\boldsymbol{w},\hat{s}(\boldsymbol{w}))$. Then, $(\boldsymbol{w}^*, \boldsymbol{s}^*)$ is a local minimum of $F(\boldsymbol{w},\boldsymbol{s})$ with $\boldsymbol{s}^*=\hat{s}(\boldsymbol{w}^*)$.
    \item[(iii)] In particular, the global minimum of $F(\boldsymbol{w},\hat{s}(\boldsymbol{w}))$ is the same as the global minimum of $F(\boldsymbol{w},\boldsymbol{s})$ and occurs at the same value of $\boldsymbol{w}$.
\end{theorem}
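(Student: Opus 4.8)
The plan is to collapse the two-variable problem onto the single-variable \emph{reduced objective} $G(\boldsymbol{w}) := F(\boldsymbol{w},\hat{s}(\boldsymbol{w}))$ and to drive everything from one elementary inequality. Since $\mathcal{L}$ does not depend on $\boldsymbol{s}$, minimizing $F$ over $\boldsymbol{s}$ at fixed $\boldsymbol{w}$ is the same as minimizing $R(\boldsymbol{w},\cdot)$, so that
\begin{align}
    F(\boldsymbol{w},\boldsymbol{s}) \geq F(\boldsymbol{w},\hat{s}(\boldsymbol{w})) = G(\boldsymbol{w}) \qquad \text{for all } \boldsymbol{w},\boldsymbol{s},
\end{align}
with equality exactly when $\boldsymbol{s}=\hat{s}(\boldsymbol{w})$. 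Uniqueness of the minimizer---which is what makes $\hat{s}$ a genuine single-valued function---comes from strict convexity of $R(\boldsymbol{w},\cdot)$, as already verified for $R_p$ via the second-derivative computation above. This inequality is the engine for all three parts.

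I would dispatch (ii) first, since it is immediate and does not even invoke continuity of $\hat{s}$. Let $\boldsymbol{w}^*$ be a local minimum of $G$ on a neighborhood $U\ni\boldsymbol{w}^*$, and put $\boldsymbol{s}^*=\hat{s}(\boldsymbol{w}^*)$. For arbitrary $\boldsymbol{s}$ and any $\boldsymbol{w}\in U$, the inequality gives $F(\boldsymbol{w},\boldsymbol{s})\geq G(\boldsymbol{w})\geq G(\boldsymbol{w}^*)=F(\boldsymbol{w}^*,\boldsymbol{s}^*)$. Hence $(\boldsymbol{w}^*,\boldsymbol{s}^*)$ minimizes $F$ on the whole slab $U\times\mathbb{R}^n$, which is a fortiori a local minimum.

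For (i), let $(\boldsymbol{w}^*,\boldsymbol{s}^*)$ be a local minimum of $F$, valid on $U\times V$. First I would restrict to the slice $\boldsymbol{w}=\boldsymbol{w}^*$: there $\boldsymbol{s}^*$ is a local minimum of the convex function $R(\boldsymbol{w}^*,\cdot)$, and convexity promotes this to a global minimum, forcing $\boldsymbol{s}^*=\hat{s}(\boldsymbol{w}^*)$. The substantive step is then the continuity of $\hat{s}$: because $\hat{s}(\boldsymbol{w}^*)=\boldsymbol{s}^*\in V$, there is a smaller neighborhood $U'\subseteq U$ on which $\hat{s}(\boldsymbol{w})\in V$, so that the lifted curve $\boldsymbol{w}\mapsto(\boldsymbol{w},\hat{s}(\boldsymbol{w}))$ stays inside $U\times V$. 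For $\boldsymbol{w}\in U'$ we then read off $G(\boldsymbol{w})=F(\boldsymbol{w},\hat{s}(\boldsymbol{w}))\geq F(\boldsymbol{w}^*,\boldsymbol{s}^*)=G(\boldsymbol{w}^*)$, so $\boldsymbol{w}^*$ is a local minimum of $G$.

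Part (iii) follows by applying the same inequality globally, $\min_{\boldsymbol{w},\boldsymbol{s}}F=\min_{\boldsymbol{w}}\min_{\boldsymbol{s}}F=\min_{\boldsymbol{w}}G$, with the optimizing $\boldsymbol{w}$ shared and $\boldsymbol{s}^*=\hat{s}(\boldsymbol{w}^*)$. I expect the main obstacle to be precisely the continuity argument in (i): without it, perturbing $\boldsymbol{w}$ could send the optimal auxiliary variable $\hat{s}(\boldsymbol{w})$ far from $\boldsymbol{s}^*$, escaping the neighborhood $V$ on which $F$ is known to dominate $F(\boldsymbol{w}^*,\boldsymbol{s}^*)$, and the implication would fail. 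Continuity of $\hat{s}$---which holds here because the minimizer of $R(\boldsymbol{w},\cdot)$ is unique and varies continuously with $\boldsymbol{w}$, as shown for $R_p$ above---is exactly what closes this gap, and it is the one hypothesis of the theorem that does real work.
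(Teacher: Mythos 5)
Your proposal is correct and follows essentially the same route as the paper's proof: the same master inequality $F(\boldsymbol{w},\boldsymbol{s})\geq F(\boldsymbol{w},\hat{s}(\boldsymbol{w}))$, the same use of biconvexity on the slice $\boldsymbol{w}=\boldsymbol{w}^*$ to force $\boldsymbol{s}^*=\hat{s}(\boldsymbol{w}^*)$, the same appeal to continuity of $\hat{s}$ to keep the lifted curve $\boldsymbol{w}\mapsto(\boldsymbol{w},\hat{s}(\boldsymbol{w}))$ inside the neighborhood where local minimality of $F$ holds in part (i), and the same exchange of minimizations for part (iii). Your reorganization (proving (ii) first and noting it needs no continuity) is a presentational improvement, not a different argument.
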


\begin{proof}
    \item[(i)] Assume that $(\boldsymbol{w}^*, \boldsymbol{s}^*)$ is a local minimum of $F(\boldsymbol{w},\boldsymbol{s})$. By the biconvexity of $R$, we have that $F(\boldsymbol{w}^*,\boldsymbol{s})$ is convex in $\boldsymbol{s}$ for any fixed $\boldsymbol{w}^*$. Therefore, $\boldsymbol{s}^*=\hat{s}(\boldsymbol{w}^*)$ is the unique minimizer of $F(\boldsymbol{w}^*,\boldsymbol{s})$ for any fixed $\boldsymbol{w}^*$.  We therefore know that the there is a local minimum for $F(\boldsymbol{w},\hat{s}(\boldsymbol{w}))$ at $(\boldsymbol{w}^*, \boldsymbol{s}^*=\hat{s}(\boldsymbol{w}^*))$. Since this is a local minimum, $\exists \epsilon_w>0,\epsilon_s>0$ such that $\forall \boldsymbol{w},\boldsymbol{s}$ with $\Vert\boldsymbol{w}-\boldsymbol{w}^*\Vert<\epsilon_w$ and $\Vert\boldsymbol{s}-\boldsymbol{s}^*\Vert<\epsilon_s$ we have $F(\boldsymbol{w},\boldsymbol{s})\geq F(\boldsymbol{w}^*, \boldsymbol{s}^*)$. To prove that $\boldsymbol{w}^*$ is a local minimum of $F(\boldsymbol{w},\hat{s}(\boldsymbol{w}))$, we need to show that $\exists \epsilon>0$ such that $\forall \boldsymbol{w}$ with $\Vert\boldsymbol{w}-\boldsymbol{w}^*\Vert<\epsilon_w$ we have $F(\boldsymbol{w},\hat{s}(\boldsymbol{w}))\geq F(\boldsymbol{w}^*, \boldsymbol{s}^*)$. By the continuity if $\hat{s}$, $\lim_{\boldsymbol{w}\to\boldsymbol{w}^*}\hat{s}(\boldsymbol{w})=\boldsymbol{s}^*$. Therefore, for small enough $\Vert \boldsymbol{w}-\boldsymbol{w}^*\Vert$, we have $\Vert \hat{s}(\boldsymbol{w})-\boldsymbol{s}^*\Vert<\epsilon_s$. Thus, there exists a neighborhood of $\boldsymbol{w}^*$, $0<\delta\leq\min(\epsilon,\epsilon_w)$ such that $\forall \boldsymbol{w}$ with $\Vert\boldsymbol{w}-\boldsymbol{w}^*\Vert<\delta$ we have $\Vert\hat{s}(\boldsymbol{w})-\boldsymbol{s}^*\Vert<\epsilon_s$ and therefore $F(\boldsymbol{w},\hat{s}(\boldsymbol{w}))\geq F(\boldsymbol{w}^*, \boldsymbol{s}^*)$.
    
    \item[(ii)] Assume next that $\boldsymbol{w}^*$ is a local minimum of $F(\boldsymbol{w},\hat{s}(\boldsymbol{w}))$. Meaning that $\exists \epsilon_W>0$ such that $\forall \boldsymbol{w}$ with $\Vert\boldsymbol{w}-\boldsymbol{w}^*\Vert<\epsilon_w$ we have $F(\boldsymbol{w},\hat{s}(\boldsymbol{w}))\geq F(\boldsymbol{w}^*, \hat{s}(\boldsymbol{w}^*))$. From the definition of $\hat{s}(\boldsymbol{w})$, we know that $F(\boldsymbol{w},\boldsymbol{s})\geq F(\boldsymbol{w},\hat{s}(\boldsymbol{w}))$. Therefore, $\forall \boldsymbol{w}$ with $\Vert\boldsymbol{w}-\boldsymbol{w}^*\Vert<\epsilon_w$ we have $F(\boldsymbol{w},\boldsymbol{s})\geq F(\boldsymbol{w}^*, \hat{s}(\boldsymbol{w}^*))$, making $(\boldsymbol{w}^*, \hat{s}(\boldsymbol{w}^*))$ a local minimum of $F(\boldsymbol{w},\boldsymbol{s})$.
    
    \item[(iii)] To show the value of the minima coincide, we use the definition of $\hat{s}(\boldsymbol{w})$ to write
        $$
            \min_{\boldsymbol{w}}F(\boldsymbol{w},\hat{s}(\boldsymbol{w}))=\min_{\boldsymbol{w}}\left[\min_{\boldsymbol{s}}F(\boldsymbol{w},\boldsymbol{s})\right]=\min_{\boldsymbol{w},\boldsymbol{s}}F(\boldsymbol{w},\boldsymbol{s}).
        $$
        Next, assume that $\boldsymbol{w}^*$ is the global minimum of $F(\boldsymbol{w},\hat{s}(\boldsymbol{w}))$. Then we $F(\boldsymbol{w},\boldsymbol{s})\geq F(\boldsymbol{w},\hat{s}(\boldsymbol{w}))\geq F(\boldsymbol{w}^*, \hat{s}(\boldsymbol{w}^*))$, meaning, $(\boldsymbol{w}^*, \hat{s}(\boldsymbol{w}^*))$ is a global minimum of $F(\boldsymbol{w},\boldsymbol{s})$. The other direction is due to the following. Since $F(\boldsymbol{w},\boldsymbol{s})\geq F(\boldsymbol{w},\hat{s}(\boldsymbol{w}))$ the minimum of $F(\boldsymbol{w},\boldsymbol{s})$ always satisfies $s=\hat{s}(\boldsymbol{w})$. Therefore, the global minimum of $F(\boldsymbol{w},\boldsymbol{s})$ is the same as the global minimum of $F(\boldsymbol{w},\hat{s}(\boldsymbol{w}))$.

\end{proof}

\section{Proximal Operators}
\label{app:proximal}
The Proximal Operator of a function $f(w)$ is the functional 
\begin{align}
    {\rm prox}_f(w)=\underset{u}{\operatorname{argmin}}\left[f(u)+\frac{1}{2}(u-w)^2\right].
\end{align}
Meaning, for any function $f$, it returns the $u$ that minimizes $f(u)+(u-w)^2/2$. Generalization from 1D to any space are trivial. 

In the case of regularized loss, proximal operators become handy once we observe the following. Say that the loss is decomposed into an unregularized loss ${\cal L}_0$ and a regularizer $R$, namely ${\cal L}={\cal L}_0+R$. For any $\alpha>0$, $w^*$ is an extremum of $\ell$ if and only if 
\begin{align}
    w^*={\rm prox}_{\alpha R}(w^*-\alpha\nabla{\cal L}_0(w^*))=\underset{u}{\operatorname{argmin}}\left[\alpha R(u)+\frac{1}{2}(u+\alpha\nabla{\cal L}_0(w^*)-w^*)^2\right].
\end{align}
Based on the definition of the proximal operator,  it is a manner of simple algebra to show the above expression is equivalent to $\nabla {\cal L}(w^*)=0$. In case both ${\cal L}_0$ and $R$ are convex, $w^*$ is therefore the global minimum of  ${\cal L}$.

In the context of learning, one can iteratively obtain $w^*$ through the sequence
\begin{align}
    w^{(t+1)}={\rm prox}_{\alpha R}\left[w^{(t)}-\alpha\nabla{\cal L}_0\left(w^{(t)}\right)\right].
\end{align}
In this context, $\alpha$ is identified with the learning rate. More generality, if we use some optimization algorithm to update our weights (e.g. Adam), such that $w\leftarrow w-\alpha\cdot \delta w$. To incorporate a proximal operator of $R$ as regularization, we will simply update $w$ as  $w\leftarrow{\rm prox}_{\alpha R}(w-\alpha\cdot\delta w)$.

For the case of $L_2$ regularization, the proximal operator is given in closed form by
\begin{align}
    \label{eq:proximal_L2}
    {\rm prox}_{\alpha\lambda_2\vert \cdot\vert^2/2}(w)=\frac{w}{1+\alpha\lambda_2},
\end{align}
this is the result we have used in deriving~\cref{eq:proximal}. For completeness, the proximal operator for the $L_1$ norm is known as the soft-thresholding operator, and is given by
\begin{align}
    {\rm prox}_{\alpha\lambda_1\vert \cdot\vert}(w)=\operatorname{sign}(w)\max\left\{\vert w\vert-\alpha\lambda_1,0\right\}.
\end{align}
% For $L_{3/2}$ regularization, the proximal operator is given by
% \begin{align}
%     {\rm prox}_{\alpha\lambda_{3/2}\vert \cdot\vert^{3/2}/(3/2)}(w)=w+\frac{\lambda_{3/2}^2}{2}\operatorname{sign}(w)\left(1-\sqrt{1+\frac{4|w|}{\lambda_{3/2}^2}}\right)
% \end{align}

\section{Non-increasing Loss under $p$WD}
\label{app:non_increasing}
In this appendix, we show that the original loss function is non-increasing under the $p$WD step. 

\begin{lemma}
    \label{lemma:proximal_step_bound}
    Let ${\cal L}:\;\mathbb{R}^n\to\mathbb{R}$ be a continuous differentiable function,, assume the gradient of ${\cal L}$ is Lipschitz continuous with constant $L$, namely $\Vert\nabla{\cal L}(\boldsymbol{w})-\nabla{\cal L}(\boldsymbol{v})\Vert\leq L\Vert\boldsymbol{w}-\boldsymbol{v}\Vert$. Let $R:\;\mathbb{R}^n\to\mathbb{R}$ be a convex function. Define $F(\boldsymbol{w})={\cal L}(\boldsymbol{w})+R(\boldsymbol{w})$.
    
    The sequence $\boldsymbol{w}^{(t+1)}={\rm prox}_{\alpha R}(\boldsymbol{w}^{(t)}-\alpha\nabla{\cal L}(\boldsymbol{w}^{(t)}))$ satisfies

    \item[(i)]
    \begin{align}
        \label{eq:proximal_step_bound}
        R(\boldsymbol{w}^{(t+1)})\leq R(\boldsymbol{w}^{(t)})-\frac{1}{2\alpha}\Vert \boldsymbol{w}^{(t+1)}-\boldsymbol{w}^{(t)}\Vert^2-\left\langle\nabla{\cal L}(\boldsymbol{w}^{(t)}),\boldsymbol{w}^{(t+1)}-\boldsymbol{w}^{(t)}\right\rangle.
    \end{align}
    \item[(ii)]
    \begin{align}
        \label{eq:proximal_step_bound_2}
        F(\boldsymbol{w}^{(t+1)})\leq F(\boldsymbol{w}^{(t)})-\frac{1-\alpha L}{2\alpha}\Vert \boldsymbol{w}^{(t+1)}-\boldsymbol{w}^{(t)}\Vert^2.
    \end{align}
\end{lemma}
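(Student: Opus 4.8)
The plan is to establish (i) directly from the variational definition of the proximal operator, and then to obtain (ii) by combining (i) with the descent lemma that the Lipschitz continuity of $\nabla{\cal L}$ provides.

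For (i), I would start from the fact that, by definition, $\boldsymbol{w}^{(t+1)}={\rm prox}_{\alpha R}(\boldsymbol{v})$ with $\boldsymbol{v}=\boldsymbol{w}^{(t)}-\alpha\nabla{\cal L}(\boldsymbol{w}^{(t)})$ is the minimizer of $\boldsymbol{u}\mapsto\alpha R(\boldsymbol{u})+\tfrac{1}{2}\Vert\boldsymbol{u}-\boldsymbol{v}\Vert^2$. In particular, evaluating this objective at $\boldsymbol{u}=\boldsymbol{w}^{(t+1)}$ cannot exceed its value at $\boldsymbol{u}=\boldsymbol{w}^{(t)}$, which gives
$$\alpha R(\boldsymbol{w}^{(t+1)})+\tfrac{1}{2}\Vert\boldsymbol{w}^{(t+1)}-\boldsymbol{v}\Vert^2\leq\alpha R(\boldsymbol{w}^{(t)})+\tfrac{1}{2}\Vert\boldsymbol{w}^{(t)}-\boldsymbol{v}\Vert^2.$$
Writing $\Delta\boldsymbol{w}=\boldsymbol{w}^{(t+1)}-\boldsymbol{w}^{(t)}$ and using $\boldsymbol{w}^{(t)}-\boldsymbol{v}=\alpha\nabla{\cal L}(\boldsymbol{w}^{(t)})$, I would expand $\Vert\boldsymbol{w}^{(t+1)}-\boldsymbol{v}\Vert^2=\Vert\Delta\boldsymbol{w}\Vert^2+2\alpha\langle\Delta\boldsymbol{w},\nabla{\cal L}(\boldsymbol{w}^{(t)})\rangle+\alpha^2\Vert\nabla{\cal L}(\boldsymbol{w}^{(t)})\Vert^2$ and $\Vert\boldsymbol{w}^{(t)}-\boldsymbol{v}\Vert^2=\alpha^2\Vert\nabla{\cal L}(\boldsymbol{w}^{(t)})\Vert^2$. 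The $\alpha^2\Vert\nabla{\cal L}\Vert^2$ terms cancel on both sides, and dividing through by $\alpha$ and rearranging leaves exactly (i). It is worth stressing that using only the minimizer inequality, rather than the $1$-strong convexity of the proximal objective, is precisely what produces the factor $\tfrac{1}{2\alpha}$; the sharper strong-convexity bound would give $\tfrac{1}{\alpha}$ and still imply the claim, but the plain comparison is all that is needed and matches the stated constant.

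For (ii), I would invoke the descent lemma: since $\nabla{\cal L}$ is $L$-Lipschitz, one has ${\cal L}(\boldsymbol{w}^{(t+1)})\leq{\cal L}(\boldsymbol{w}^{(t)})+\langle\nabla{\cal L}(\boldsymbol{w}^{(t)}),\Delta\boldsymbol{w}\rangle+\tfrac{L}{2}\Vert\Delta\boldsymbol{w}\Vert^2$, which follows from the fundamental theorem of calculus applied to $\tau\mapsto{\cal L}(\boldsymbol{w}^{(t)}+\tau\Delta\boldsymbol{w})$ together with the Lipschitz estimate on $\nabla{\cal L}$. Adding this to (i) and using $F={\cal L}+R$, the inner-product terms $\langle\nabla{\cal L}(\boldsymbol{w}^{(t)}),\Delta\boldsymbol{w}\rangle$ cancel, while the two quadratic terms combine through $\tfrac{L}{2}-\tfrac{1}{2\alpha}=-\tfrac{1-\alpha L}{2\alpha}$, yielding (ii) at once.

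There is no serious obstacle here, as the argument is the standard proximal-gradient descent analysis; the only points requiring care are the bookkeeping in the expansion in (i)—in particular choosing the plain minimizer inequality so that the constant matches the stated $\tfrac{1}{2\alpha}$—and the correct statement of the descent lemma, which is the single place where the Lipschitz hypothesis enters. I would close with the remark that taking $\alpha\leq 1/L$ renders the coefficient $\tfrac{1-\alpha L}{2\alpha}$ nonnegative, so that $F$ is monotonically non-increasing along the $p$WD iterates, which is the property the algorithm relies upon.
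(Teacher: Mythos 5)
Your proposal is correct, but it takes a genuinely different route from the paper in a literal sense: the paper does not prove this lemma at all, its entire proof being the citation \citep[Thm.~11.3]{garrigos2024handbook}. Your argument supplies the missing self-contained derivation, and it is sound. For (i), the bookkeeping checks out: comparing the prox objective at $\boldsymbol{u}=\boldsymbol{w}^{(t+1)}$ and at $\boldsymbol{u}=\boldsymbol{w}^{(t)}$, the $\tfrac{\alpha^2}{2}\Vert\nabla{\cal L}(\boldsymbol{w}^{(t)})\Vert^2$ terms cancel, and dividing by $\alpha$ gives exactly \cref{eq:proximal_step_bound}; your side remark is also accurate, since invoking instead the $1$-strong convexity of the prox objective (equivalently, the optimality condition $-\Delta\boldsymbol{w}/\alpha-\nabla{\cal L}(\boldsymbol{w}^{(t)})\in\partial R(\boldsymbol{w}^{(t+1)})$ together with convexity of $R$) would sharpen the constant to $1/\alpha$ and hence sharpen (ii) to coefficient $(2-\alpha L)/(2\alpha)$, but the plain comparison matches the stated constants. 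A small bonus of your route is that convexity of $R$ is used only to guarantee that the prox is a well-defined (unique) minimizer, not in the inequality itself. For (ii), the descent lemma is indeed the only place the Lipschitz hypothesis enters, and the cancellation of the inner products with the combination $\tfrac{L}{2}-\tfrac{1}{2\alpha}=-\tfrac{1-\alpha L}{2\alpha}$ is exactly right. What each approach buys: the paper's citation is economical and delegates correctness to a standard reference, while your version makes \cref{app:non_increasing} self-contained and pins down precisely which hypotheses are used where — useful here, since \cref{thm:non_increasing} applies the lemma with $R=R_p(\cdot,\boldsymbol{s}^{(t)})$, a convex quadratic at fixed $\boldsymbol{s}^{(t)}$, so every hypothesis in your argument is satisfied in that application.
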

\begin{proof}
    See~\citep[Thm. 11.3]{garrigos2024handbook}
\end{proof}

We prove the following theorem.
\begin{theorem}
    \label{thm:non_increasing}
    Let ${\cal L}:\;\mathbb{R}^n\to\mathbb{R}$ be a continuous differentiable function, assume the gradient of ${\cal L}$ is Lipschitz continuous with constant $L$, namely $\Vert\nabla{\cal L}(\boldsymbol{w})-\nabla{\cal L}(\boldsymbol{v})\Vert\leq L\Vert\boldsymbol{w}-\boldsymbol{v}\Vert$. Define $F(\boldsymbol{w})={\cal L}(\boldsymbol{w})+(\lambda_p/p)\Lp{p}{\boldsymbol{w}}$. Then, the sequence
    $$
        \boldsymbol{w}^{(t+1)}=\frac{\boldsymbol{w}^{(t)}-\alpha\nabla{\cal L}(\boldsymbol{w}^{(t)})}{1+\alpha\lambda_p\vert\boldsymbol{w}^{(t)}\vert^{p-2}}
    $$
    is such that 
    $$
        F(\boldsymbol{w}^{(t+1)})\leq F(\boldsymbol{w}^{(t)})-\frac{1-\alpha_tL}{2\alpha_t}\Vert \boldsymbol{w}^{(t+1)}-\boldsymbol{w}^{(t)}\Vert^2. 
    $$
    Clearly, it is non-increasing, provided $\alpha_t\leq 1/L$.
\end{theorem}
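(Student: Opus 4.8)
The plan is to recognize the $p$WD update as an \emph{exact} proximal step for a fixed weighted-$L_2$ surrogate, and then transfer the descent guarantee of \cref{lemma:proximal_step_bound} from that surrogate to the true $p$-norm objective $F$ using the majorization property established in \cref{eq:equivalence,eq:s_opt_der}. The central difficulty is that \cref{lemma:proximal_step_bound} applies only to a \emph{fixed} convex regularizer, whereas the effective regularizer in the $p$WD step changes at every iteration because the weights $\boldsymbol{s}_n=\vert\boldsymbol{w}^{(t)}\vert^{p-2}$ are recomputed from the current iterate. The bridge between the two is the observation that the frozen surrogate touches $R_p$ at $\boldsymbol{w}^{(t)}$ and lies above it everywhere else.

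Concretely, first I would fix the iteration $t$, set $\boldsymbol{s}_n=\vert\boldsymbol{w}^{(t)}\vert^{p-2}$, and define the frozen quadratic $R_n(\boldsymbol{w})=\tfrac{\lambda_p}{2}\sum_i s_{n,i}w_i^2$. Comparing with the closed-form $L_2$ proximal operator in \cref{eq:proximal_L2} applied coordinate-wise, I would verify that $\mathrm{prox}_{\alpha R_n}(\boldsymbol{w}^{(t)}-\alpha\boldsymbol{\nabla}{\cal L}(\boldsymbol{w}^{(t)}))$ reproduces exactly the $p$WD map in the statement. Since each $s_{n,i}>0$ on the coordinates with $w_i^{(t)}\neq0$, the function $R_n$ is a positive-definite quadratic and hence convex, so \cref{lemma:proximal_step_bound}(ii) applies to $F_n:={\cal L}+R_n$ and yields $F_n(\boldsymbol{w}^{(t+1)})\leq F_n(\boldsymbol{w}^{(t)})-\tfrac{1-\alpha L}{2\alpha}\Vert\boldsymbol{w}^{(t+1)}-\boldsymbol{w}^{(t)}\Vert^2$. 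The coordinates with $w_i^{(t)}=0$ are frozen by the update, contribute zero to every term, and may be discarded.

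The final step replaces $R_n$ by the true regularizer $(\lambda_p/p)\Lp{p}{\boldsymbol{w}}$. Writing $R_p(\boldsymbol{w},\boldsymbol{s}_n)=R_n(\boldsymbol{w})+\tfrac{\lambda_p}{2}\sum_iK(s_{n,i})$ as in \cref{eq:reg_dual}, the two differ only by a $\boldsymbol{w}$-independent constant, so adding that constant to both sides of the descent inequality turns it into a statement about ${\cal L}+R_p(\cdot,\boldsymbol{s}_n)$. I would then invoke the two facts proved around \cref{eq:s_opt_der}: because $\boldsymbol{s}_n$ minimizes $R_p(\boldsymbol{w}^{(t)},\cdot)$, we have the tight equality $R_p(\boldsymbol{w}^{(t)},\boldsymbol{s}_n)=R_p(\boldsymbol{w}^{(t)})$ at the base point, while $R_p(\boldsymbol{w})=\min_{\boldsymbol{s}}R_p(\boldsymbol{w},\boldsymbol{s})\leq R_p(\boldsymbol{w},\boldsymbol{s}_n)$ gives the upper bound $R_p(\boldsymbol{w}^{(t+1)})\leq R_p(\boldsymbol{w}^{(t+1)},\boldsymbol{s}_n)$ at the new point. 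Chaining these,
\begin{align*}
F(\boldsymbol{w}^{(t+1)})
&={\cal L}(\boldsymbol{w}^{(t+1)})+R_p(\boldsymbol{w}^{(t+1)})
\leq {\cal L}(\boldsymbol{w}^{(t+1)})+R_p(\boldsymbol{w}^{(t+1)},\boldsymbol{s}_n)\\
&\leq {\cal L}(\boldsymbol{w}^{(t)})+R_p(\boldsymbol{w}^{(t)},\boldsymbol{s}_n)-\tfrac{1-\alpha L}{2\alpha}\Vert\boldsymbol{w}^{(t+1)}-\boldsymbol{w}^{(t)}\Vert^2\\
&=F(\boldsymbol{w}^{(t)})-\tfrac{1-\alpha L}{2\alpha}\Vert\boldsymbol{w}^{(t+1)}-\boldsymbol{w}^{(t)}\Vert^2,
\end{align*}
which is the claim with $\alpha_t=\alpha$, non-increasing once $\alpha\leq1/L$. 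I expect the only genuinely delicate point to be this majorization sandwich, namely exploiting that the surrogate is tight at the base point but merely an upper bound at the new point; the identification of the $p$WD map with a weighted-$L_2$ proximal step is routine algebra, and the per-step descent itself is black-boxed through \cref{lemma:proximal_step_bound}.
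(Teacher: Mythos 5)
Your proposal is correct and follows essentially the same route as the paper's own proof: both identify the $p$WD update as an exact proximal step for the frozen surrogate $R_p(\cdot,\boldsymbol{s}^{(t)})$, invoke \cref{lemma:proximal_step_bound}(ii) at fixed $\boldsymbol{s}^{(t)}$, and close with the majorization sandwich $F(\boldsymbol{w}^{(t+1)})\leq F(\boldsymbol{w}^{(t+1)},\boldsymbol{s}^{(t)})$ together with the tightness $F(\boldsymbol{w}^{(t)})=F(\boldsymbol{w}^{(t)},\boldsymbol{s}^{(t)})$ from \cref{eq:s_opt_der}. The only differences are cosmetic: you split off the $\boldsymbol{w}$-independent $K(\boldsymbol{s}^{(t)})$ constant and explicitly dispose of the frozen $w_i^{(t)}=0$ coordinates, points the paper leaves implicit.
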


\begin{proof}
    We will use the generalized loss function $F(\boldsymbol{w},\boldsymbol{s})={\cal L}(\boldsymbol{w})+R_p(\boldsymbol{w},\boldsymbol{s})$, where $R_p(\boldsymbol{w},\boldsymbol{s})$ is defined in~\cref{eq:reg_dual}. Importantly, $F(\boldsymbol{w})=\min_{\boldsymbol{s}}F(\boldsymbol{w},\boldsymbol{s})$. We assume the following update rule:
    $$
        \boldsymbol{w}^{(t+1)}={\rm prox}_{\alpha_t R_p(\cdot,\boldsymbol{s}_t)}\left(\boldsymbol{w}^{(t)}-\alpha_t\nabla{\cal L}(\boldsymbol{w}^{(t)})\right)=\frac{\boldsymbol{w}^{(t)}-\alpha_t\nabla{\cal L}(\boldsymbol{w}^{(t)})}{1+\lambda_p \alpha_t\boldsymbol{s}^{(t)}}\;\;,\;\;\boldsymbol{s}^{(t+1)}=\vert\boldsymbol{w}^{(t+1)}\vert^{p-2},
    $$
    where all the operations are done element-wise. We note that given the update rule of $\boldsymbol{s}$ above, the update rule of $\boldsymbol{w}$ is the same as the one in the theorem. At fixed $\boldsymbol{s}=\boldsymbol{s}^{(t)}$, we can apply~\cref{lemma:proximal_step_bound} to $F(\boldsymbol{w},\boldsymbol{s}^{(t)})$, and obtain
    $$
        F(\boldsymbol{w}^{(t+1)},\boldsymbol{s}^{(t)})\leq F(\boldsymbol{w}^{(t)},\boldsymbol{s}^{(t)})-\frac{1-\alpha_tL}{2\alpha_t}\Vert \boldsymbol{w}^{(t+1)}-\boldsymbol{w}^{(t)}\Vert^2.
    $$
    From~\cref{app:equivalence,eq:s_opt_der}, we know that the update rule for $\boldsymbol{s}$ is such that $F(\boldsymbol{w}^{(t)})=F(\boldsymbol{w}^{(t)},\boldsymbol{s}^{(t)})\leq F(\boldsymbol{w}^{(t)},\boldsymbol{s})$ for any $\boldsymbol{s}$. Therefore, we have
    $$
        F(\boldsymbol{w}^{(t+1)})\leq F(\boldsymbol{w}^{(t)})-\frac{1-\alpha_tL}{2\alpha_t}\Vert \boldsymbol{w}^{(t+1)}-\boldsymbol{w}^{(t)}\Vert^2.
    $$
\end{proof}
We note that while very similar steps are the base for the proof of general convergence of proximal gradient methods~\citep[E.g. Thm. 11.3]{garrigos2024handbook}, our case is more involved due to the $\boldsymbol{s}$ dependence of the proximal operator.

\section{Stability of the $w_i=0$ Fixed Point}
\label{app:stability}
In this appendix, we discuss the stability of the $w_i=0$ fixed point of the proximal gradient step. We will focus on a single weight case, the generalization to a higher dimension follows trivially. Starting from the proximal gradient step in~\cref{eq:proximal}, we want to ask whether a point arbitrarily close to $w=0$ will be driven to $w=0$ by the proximal gradient step. We will show that this is indeed the case for $p<1$, while for $p>1$ the fixed point is unstable. We assume that $w=0$ is not the global minimum of the unregularized loss, and assume that the weight prior to the proximal gradient step is small compared to the weight update by the unregularized loss, namely, 
\begin{align}
    \label{eq:small_w}
    w=\epsilon\;\;:\;\;\vert\epsilon\vert\ll\alpha\vert\delta w\vert.
\end{align}
This means that the current proximal gradient step is given by
\begin{align}
    \label{eq:small_w_step}
    w\gets\frac{\epsilon-\alpha\delta w}{1+\alpha\lambda_p\epsilon^{p-2}}\simeq -\frac{\alpha \epsilon^{2-p}\delta w}{\epsilon^{2-p}+\alpha\lambda_p}\simeq-\frac{\delta w \epsilon^{2-p}}{\lambda_p}.
\end{align}
In the last step we have assumed that $\alpha\lambda_p\epsilon^{p-2}\gg1$, which is a necessary condition for the proximal gradient step to drive the weight to zero. The ratio of the updated weight to the original weight is thus given by
\begin{align}
    \label{eq:ratio}
    \left\vert\frac{w}{\epsilon}\right\vert\simeq\frac{\vert\delta w\vert}{\lambda_p}\vert\epsilon\vert^{1-p}.
\end{align}
For sufficiently small $\vert\epsilon\vert$, the above ratio is smaller than 1 for $p<1$, and larger than 1 for $p>1$. This means that $w=0$ is a stable fixed point for $p<1$, and an unstable for $p>1$. For $p=1$, the fixed point is stable for $\vert\delta w\vert<\lambda_1$, as it should for $L_1$ regularization.

\section{$p$Adam Code}
\label{app:padam_code}

Here, we provide an example for implementing $p$WD on the standard Adam algorithm.

\captionsetup[lstlisting]{labelsep=colon}

\lstdefinestyle{custompython}{
  belowcaptionskip=1\baselineskip,
  breaklines=true,
  frame=L,
  xleftmargin=\parindent,
  language=Python,
  showstringspaces=false,
  basicstyle=\footnotesize\ttfamily,
  keywordstyle=\bfseries\color{green!40!black},
  commentstyle=\itshape\color{purple!40!black},
  identifierstyle=\color{blue},
  stringstyle=\color{orange},
}

\begin{lstlisting}[style=custompython, caption={\texttt{PyTorch} pAdam optimizer implementation}, label=padam]
    class pAdam(torch.optim.AdamW):
    def __init__(self, params, lr=1e-3, betas=(0.9, 0.999), eps=1e-8, lambda_p=1e-2, p_norm=1, *args, **kwargs):
        super(pAdam, self).__init__(params, lr=lr, betas=betas, eps=eps, weight_decay=0, *args, **kwargs)
        self.p_norm = p_norm
        self.lambda_p = lambda_p
    

    @torch.no_grad()
    def step(self, closure=None):
        # Store the old params
        old_params = []
        for group in self.param_groups:
            old_params.append({param: param.data.clone() for param in group['params'] if param.grad is not None})

        # Perform the standard AdamW step
        loss = super(pAdam, self).step(closure)

        # Perform the pWD step
        for group, old_group in zip(self.param_groups, old_params):
            lambda_p_group = group.get('lambda_p', self.lambda_p)  # support prams groups 
            if lambda_p_group > 0:  # Apply regularization only for lambda_p > 0
                for param in group['params']:
                    if param.grad is None:
                        continue

                    # Use old parameters in the decay factor
                    param_old = old_group[param]
                    X = param_old.abs()**(2 - self.p_norm)
                    update_term = X / (X + self.p_norm * group['lr'] * lambda_p_group)

                    # pWD step
                    param.data.mul_(update_term)

        return loss
\end{lstlisting}

\section{Experimental Details}
\label{app:experimental_details}
In this section, we provide additional details on the experimental setup and hyperparameters used in our experiments. We include supplementary figures that were omitted from the main text.

In all experiments we used Adam as our base optimizer. We held the Adam hyperparameters constant for all experiments:
\begin{itemize}
    \item $\beta_1=0.9$.
    \item $\beta_2=0.999$.
    \item $\epsilon=10^{-8}$.
\end{itemize}

We used a learning rate schedule comprised of a linear warm-up, up to \texttt{max\_lr}, followed by a cosine annealing reaching a minimum learning rate of \texttt{min\_lr}=\texttt{max\_lr}/100.

\subsection{ResNet18 on CIFAR-10}
We used the standard ResNet18 architecture for our experiments. We trained the network for 100 epochs with a batch size of 64, and 4 workers for data loading. The linear warm-up was set to 3 epochs. We scanned \texttt{max\_lr} and $\lambda_p$ for a range of $p$ values. The accuracy contours are shown below in~\cref{fig:resnet_contours}.

\begin{figure*}[t]
    \vskip 0.2in
    \begin{center}
    \centerline{\includegraphics[width=0.3\textwidth]{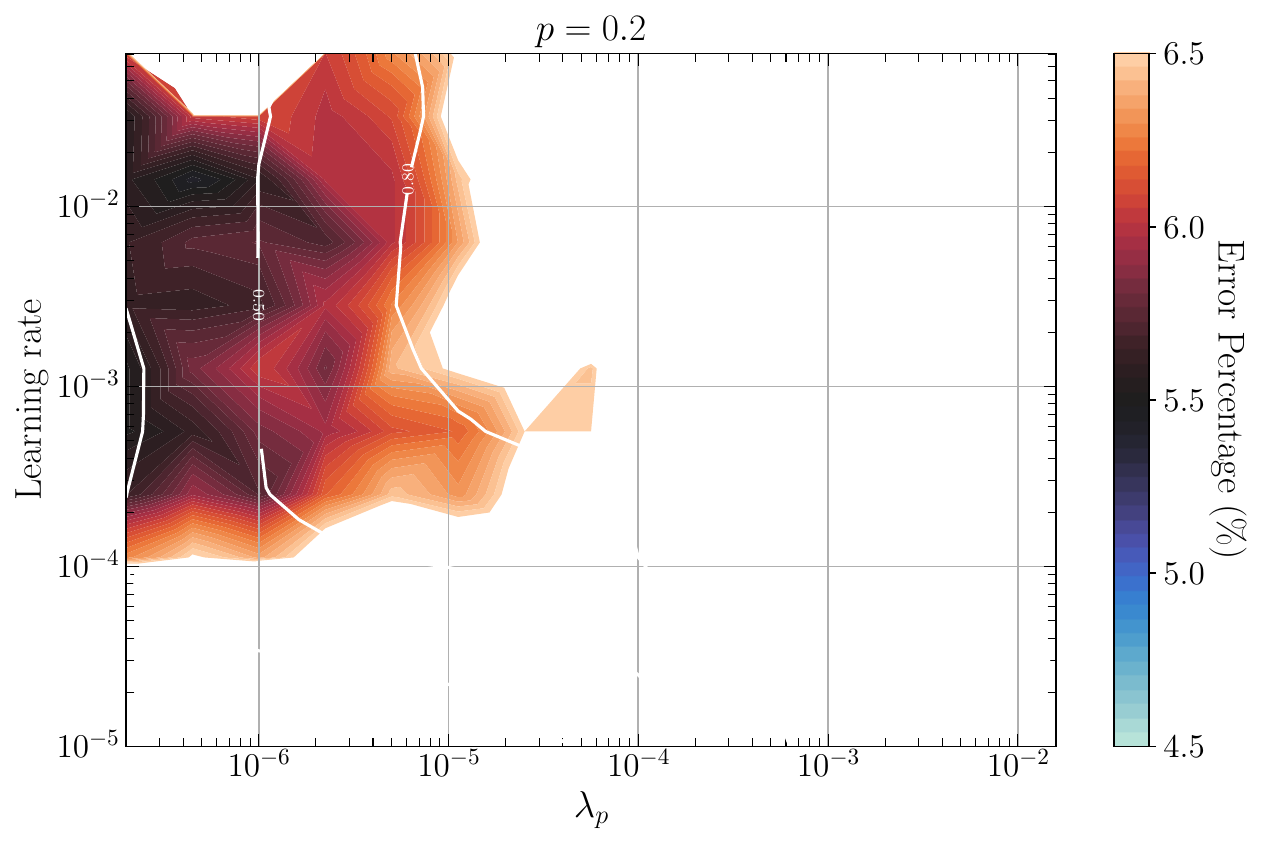}\includegraphics[width=0.3\textwidth]{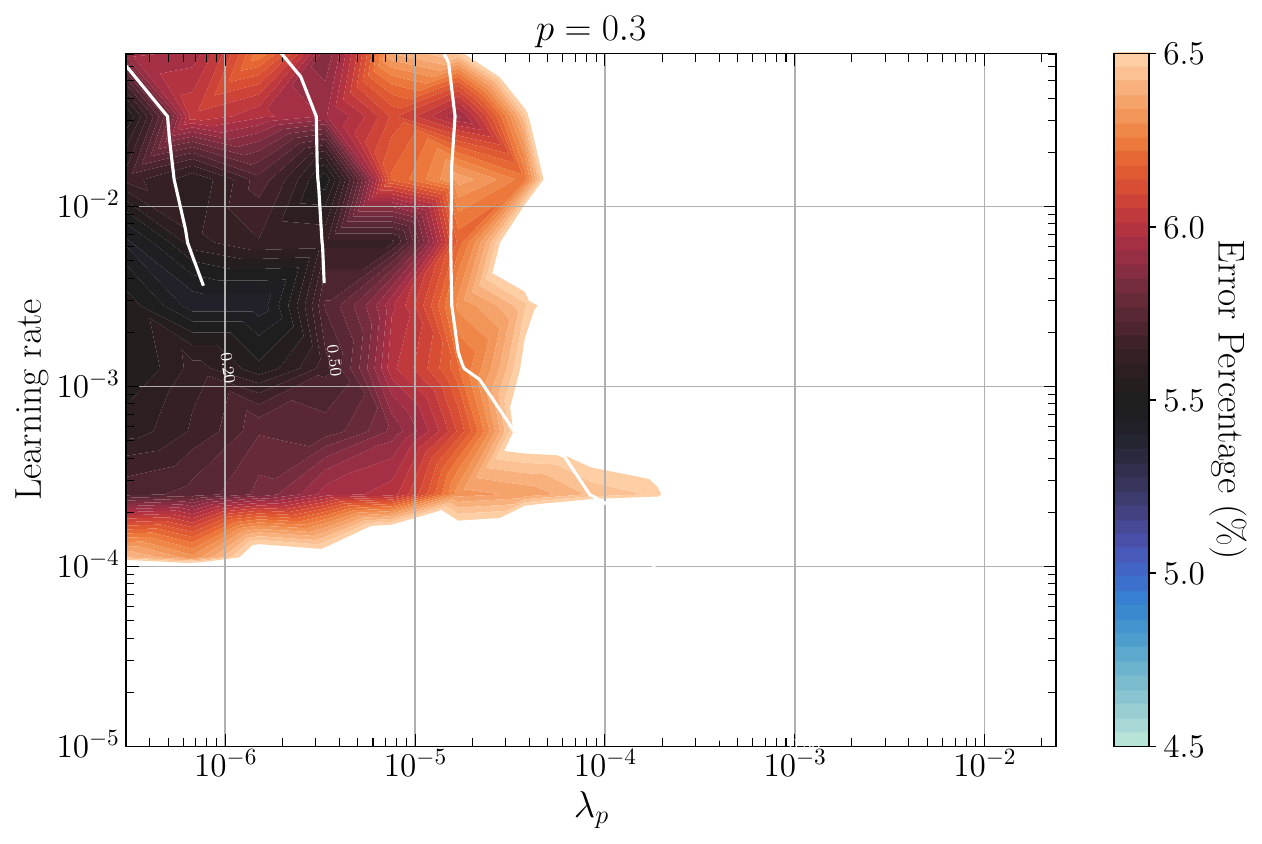}\includegraphics[width=0.3\textwidth]{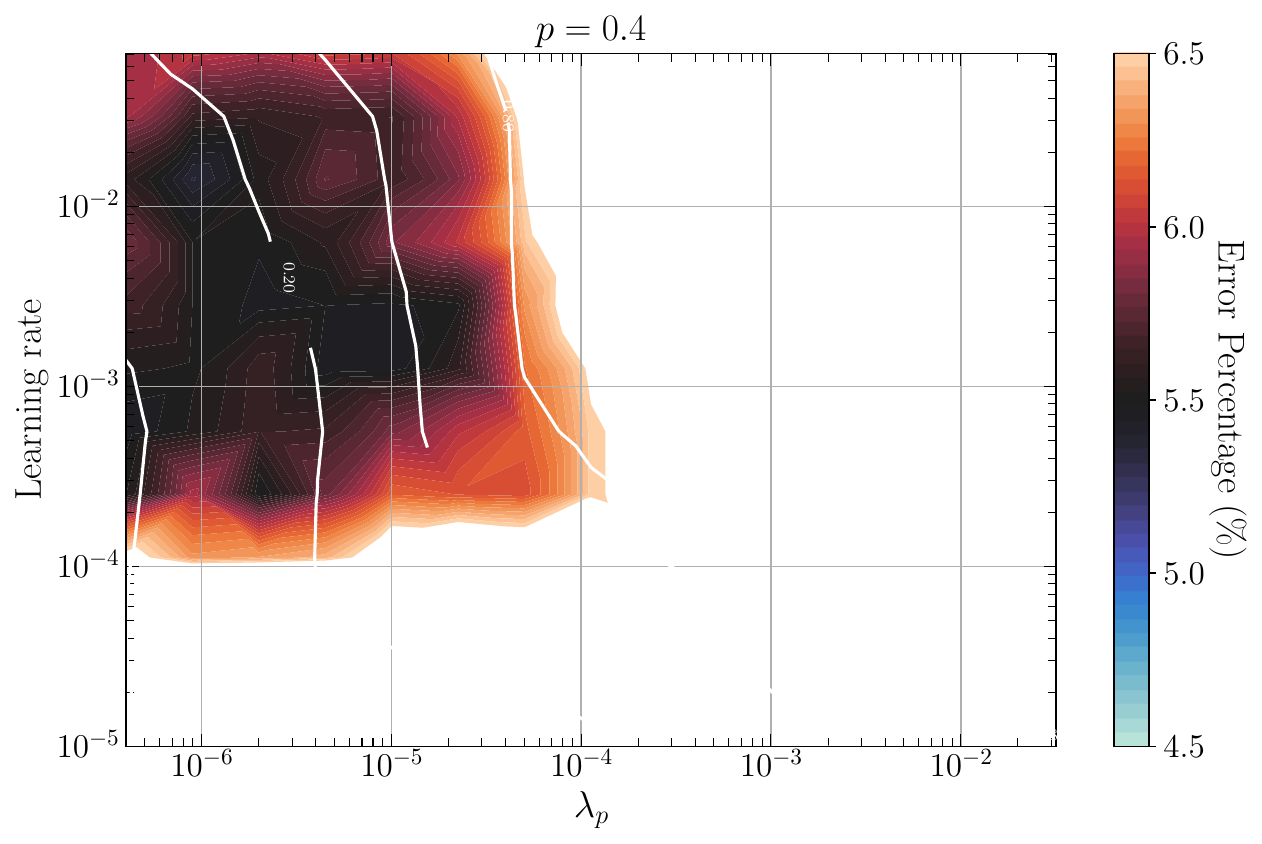}}
    \centerline{\includegraphics[width=0.3\textwidth]{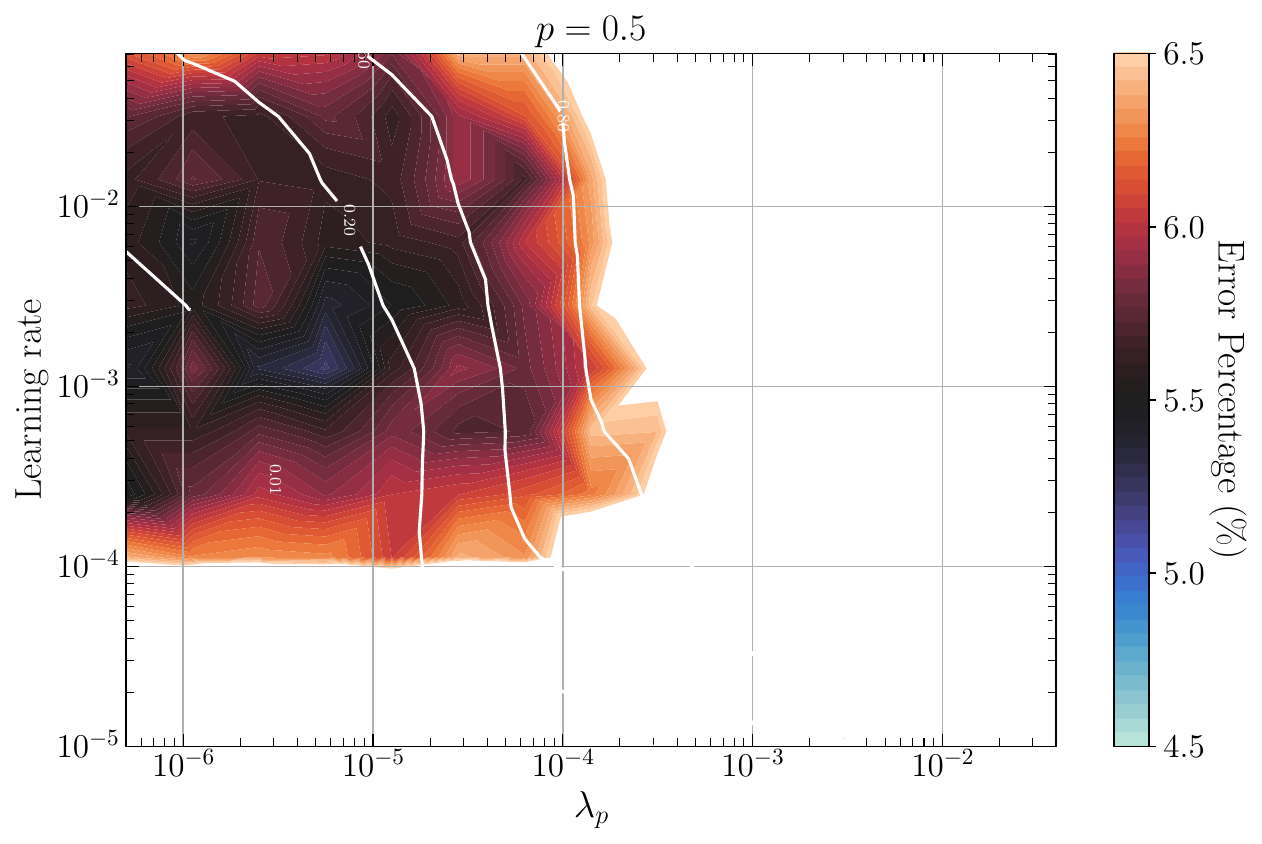}\includegraphics[width=0.3\textwidth]{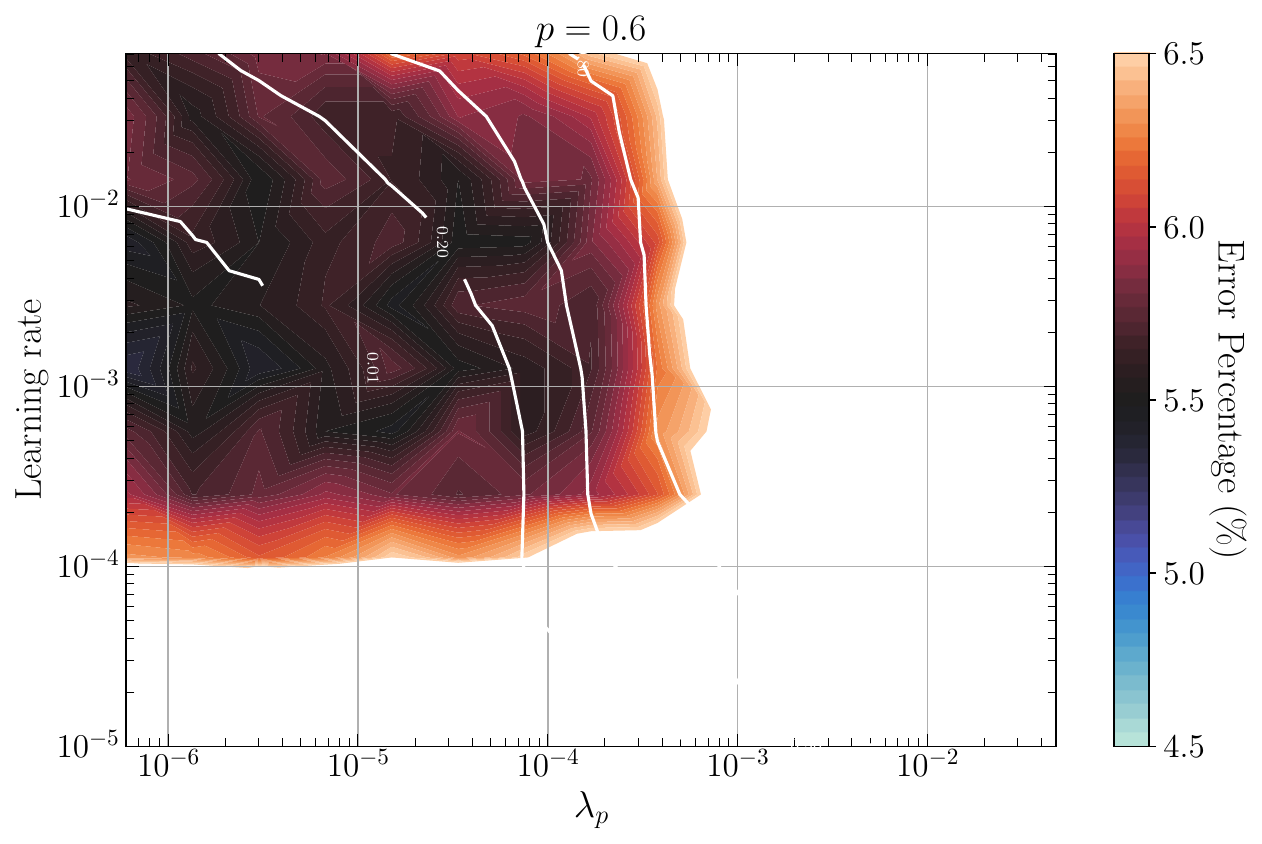}\includegraphics[width=0.3\textwidth]{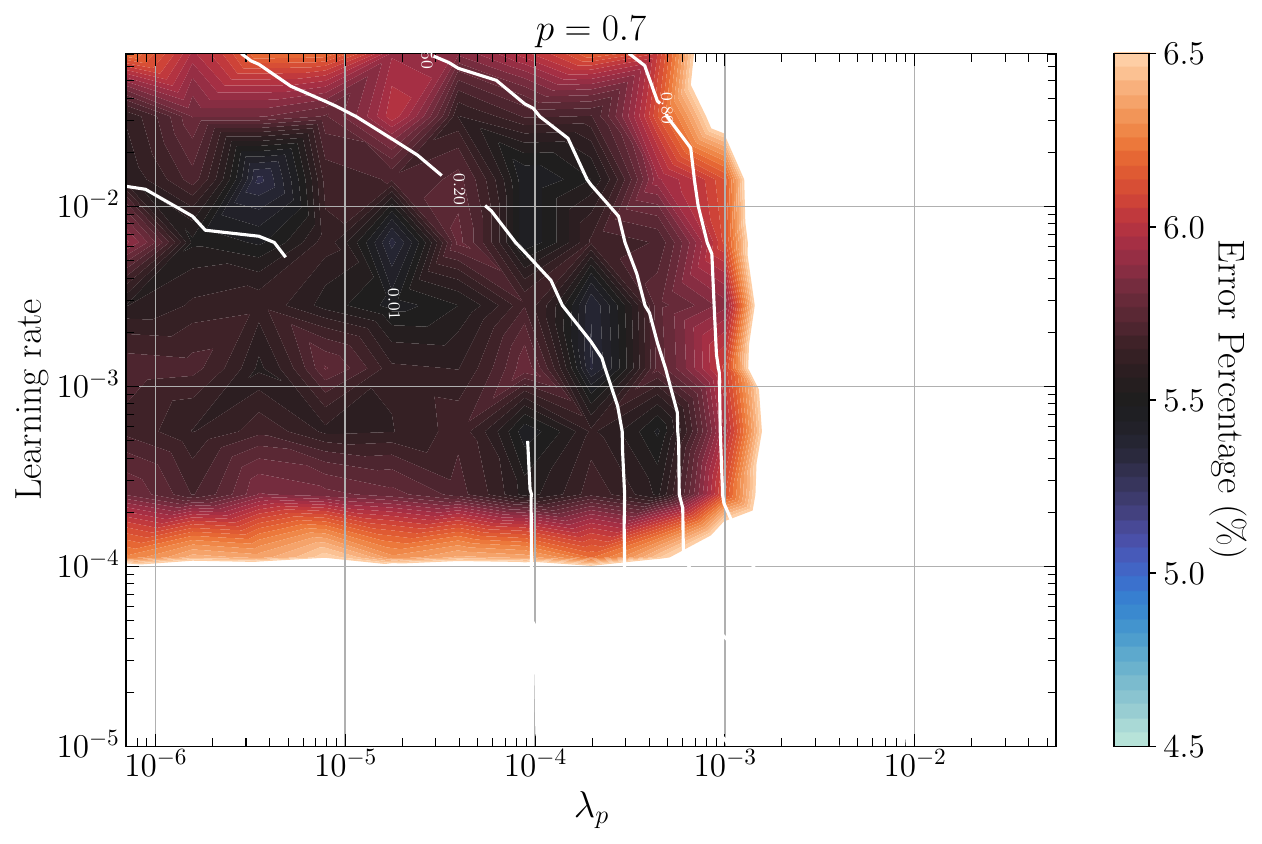}}
    \centerline{\includegraphics[width=0.3\textwidth]{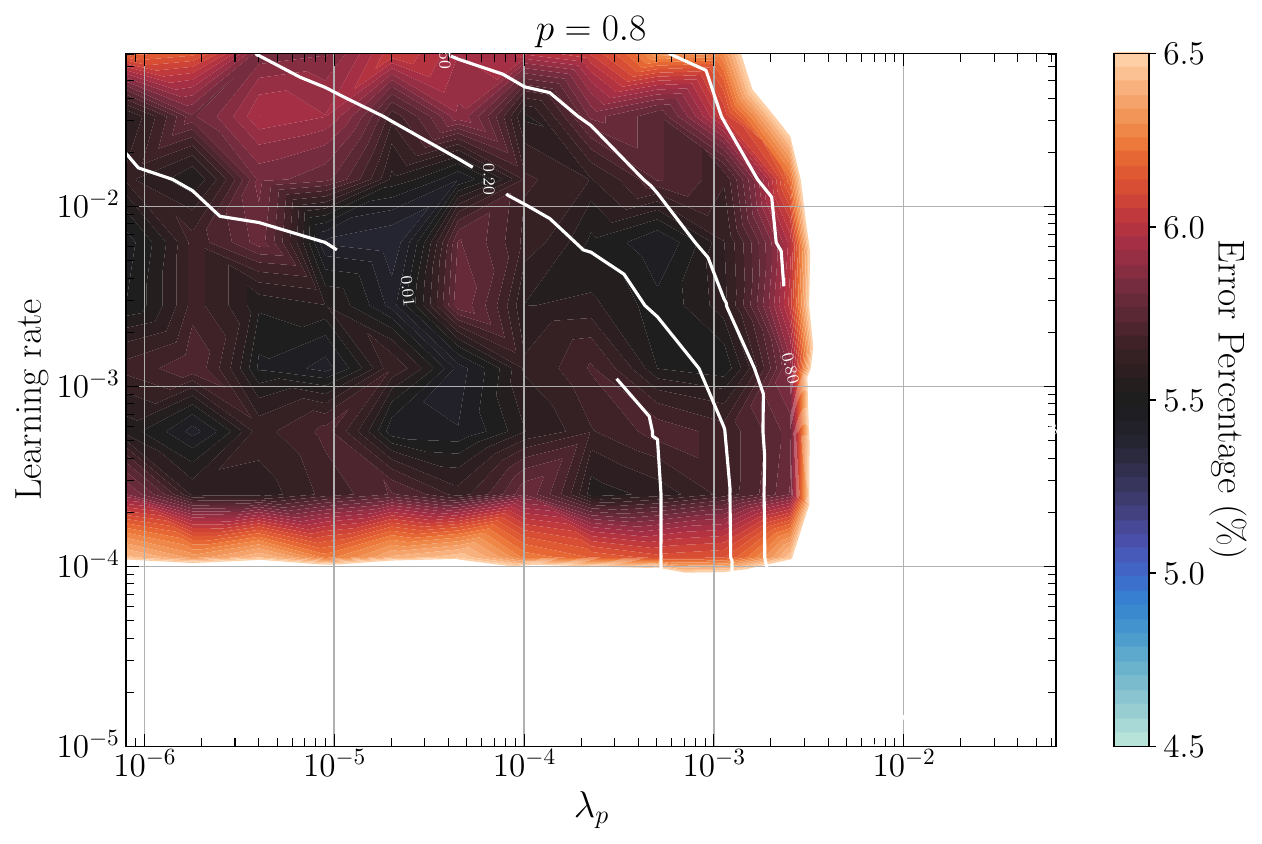}\includegraphics[width=0.3\textwidth]{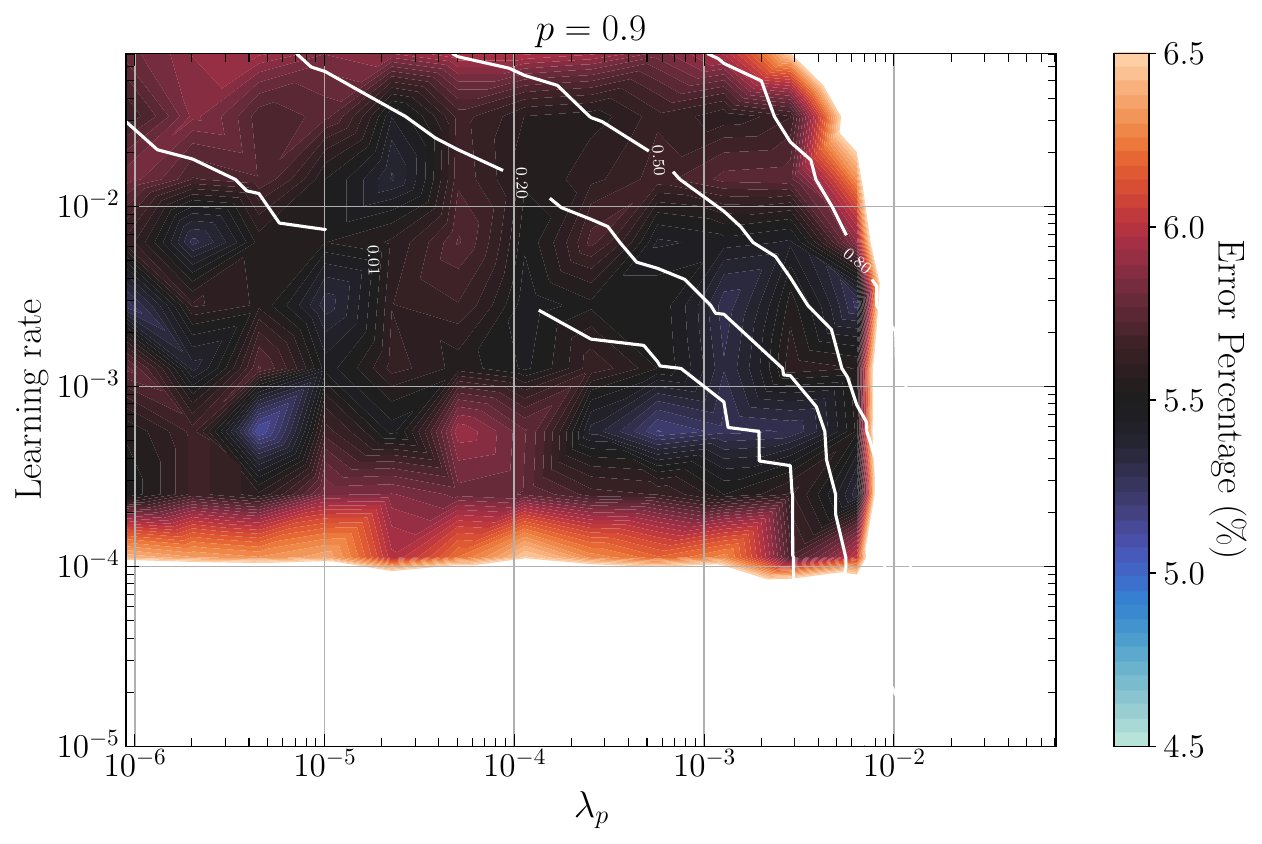}\includegraphics[width=0.3\textwidth]{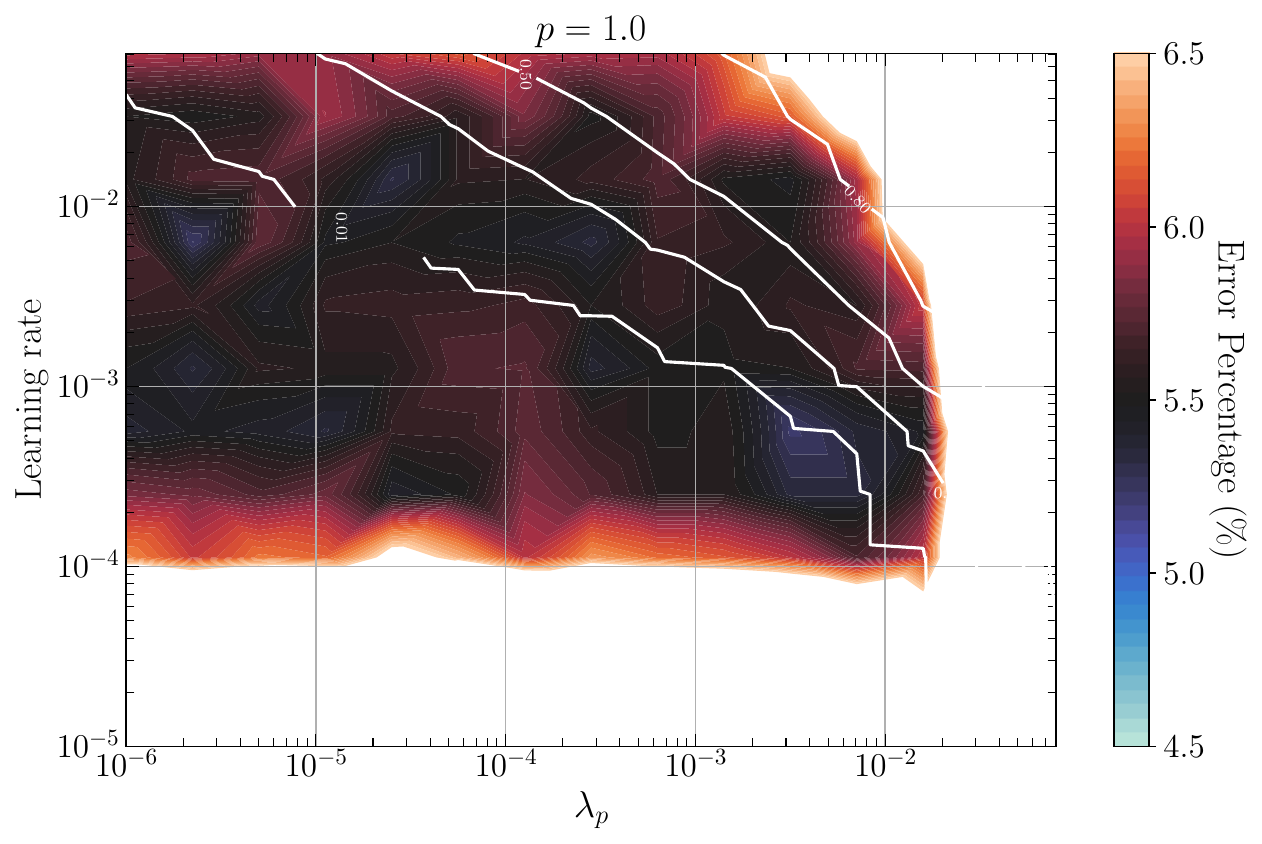}}
    \centerline{\includegraphics[width=0.3\textwidth]{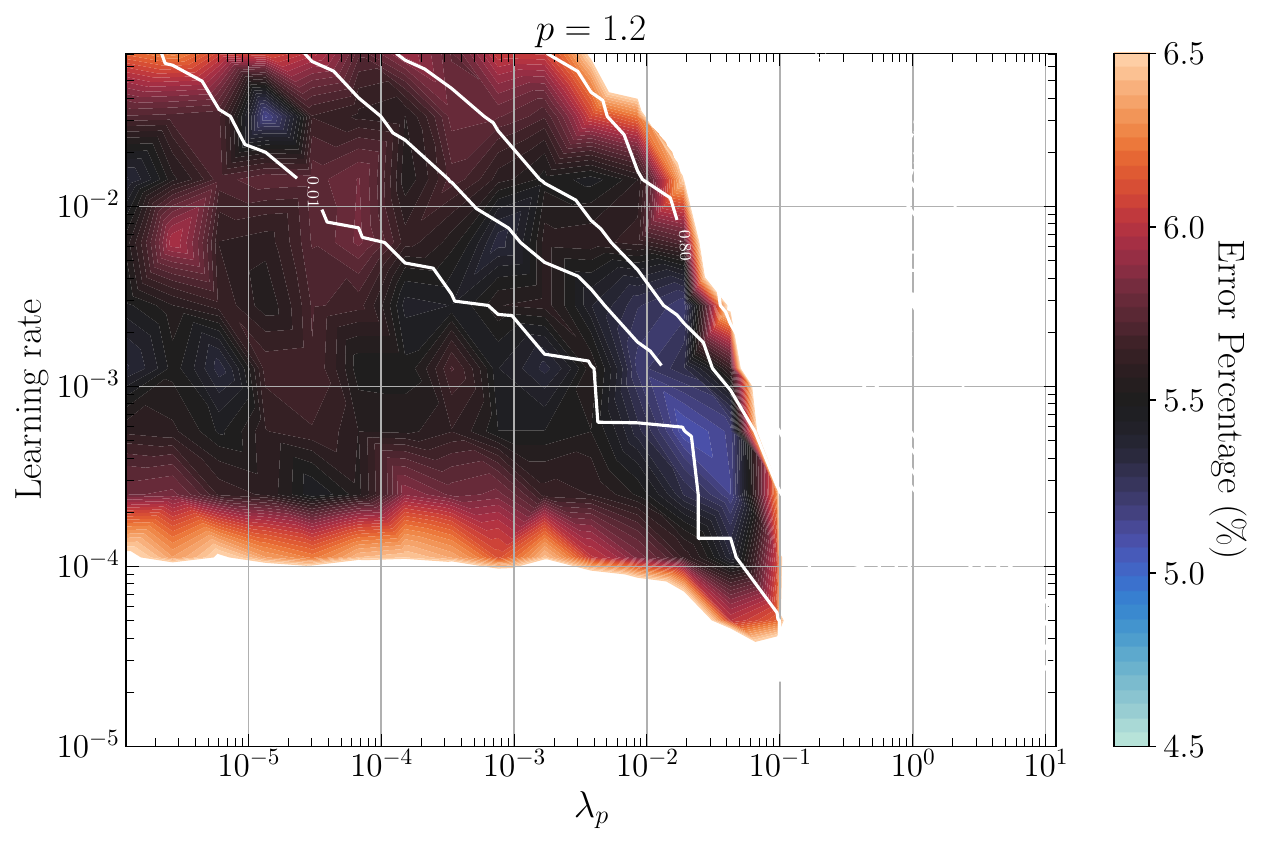}\includegraphics[width=0.3\textwidth]{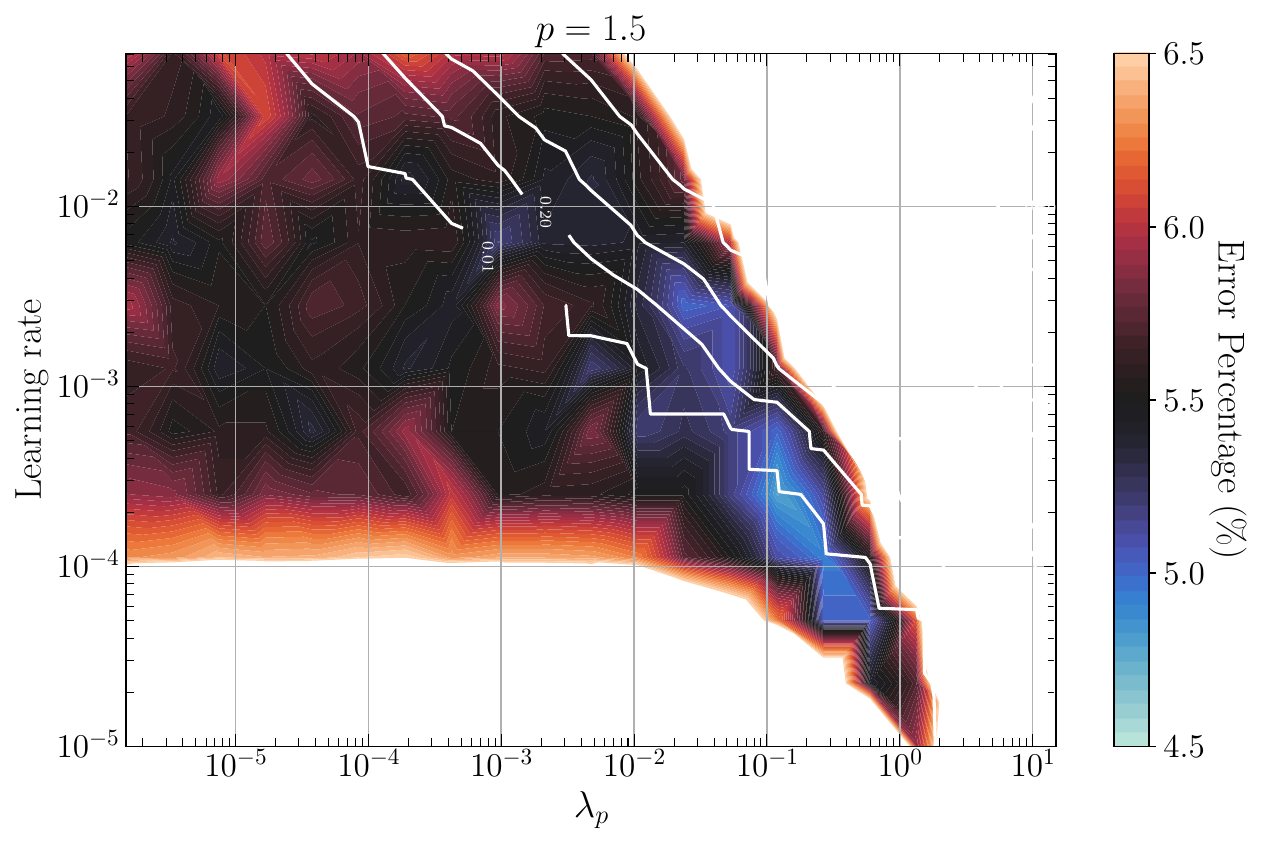}\includegraphics[width=0.3\textwidth]{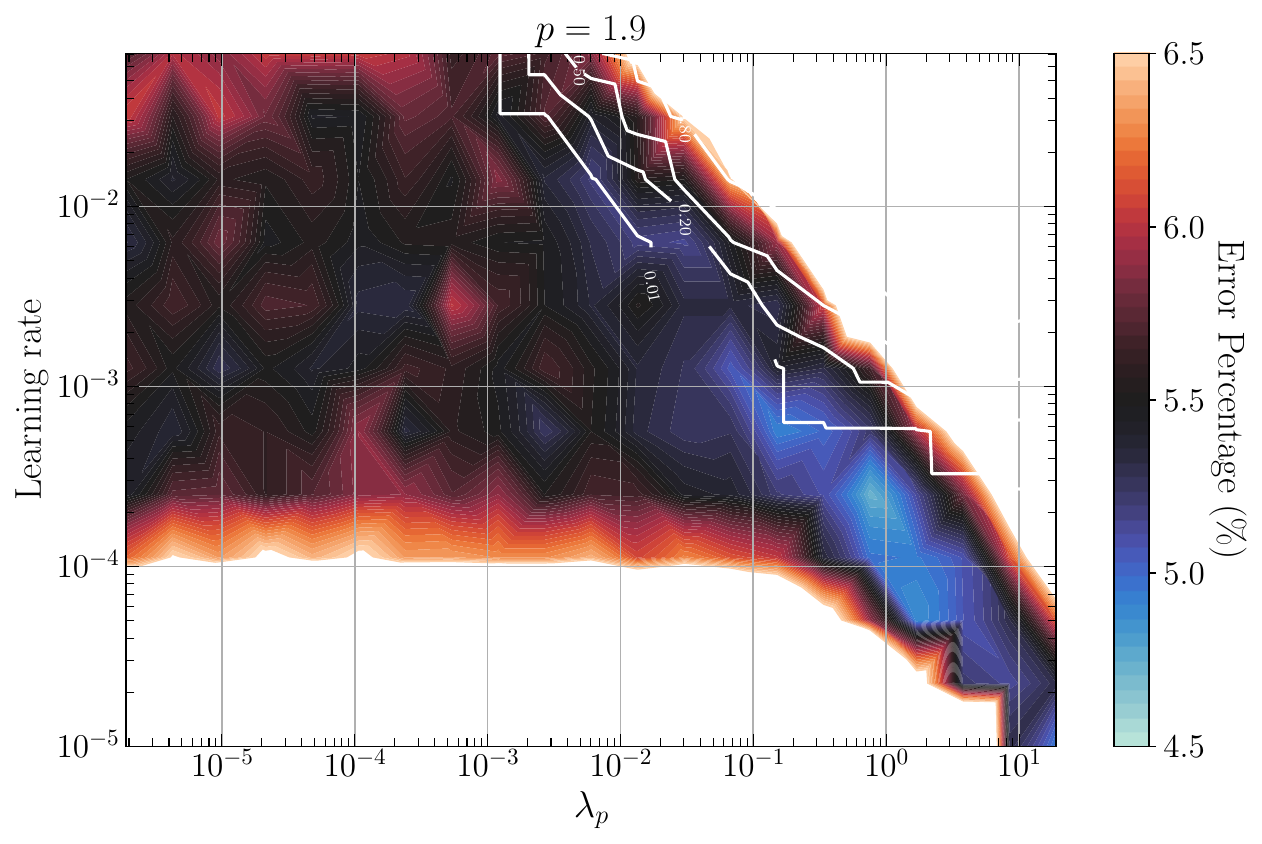}}
    \caption{
        Contours of validation accuracy after 100 training epochs on the $\lambda_p$ vs. learning rate plane, for ResNet18 on CIFAR-10. White contours represent the [0.01, 0.2, 0.4, 0.8] sparisty levlel. 
    }
    \label{fig:resnet_contours}
    \end{center}
    \vskip -0.2in
\end{figure*}

\subsection{nanoGPT on Tiny Shakespeare}
We used the nanoGPT architecture for our experiments. We trained the network for 5000 iterations. We used a batch size of 64, block size of 256, 6 attention heads, 6 layers, embedding dimension of size 384, and gradient clipping of 1.0. We scanned \texttt{max\_lr} and $\lambda_p$ for a range of $p$ values. The linear warm-up was set to 100 iterations. The accuracy contours are shown below in~\cref{fig:nanoGPT_contours}.

\begin{figure*}[t]
    \vskip 0.2in
    \begin{center}
    \centerline{\includegraphics[width=0.3\textwidth]{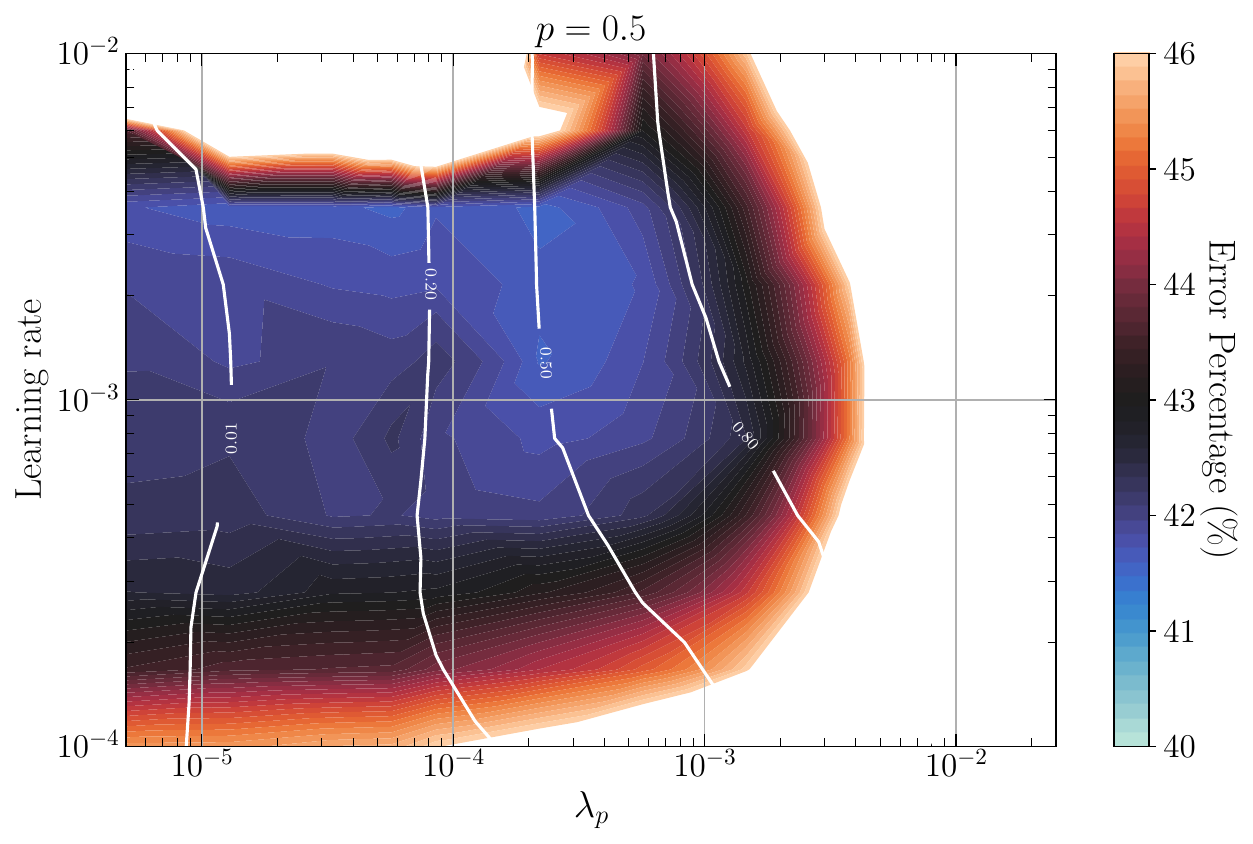}\includegraphics[width=0.3\textwidth]{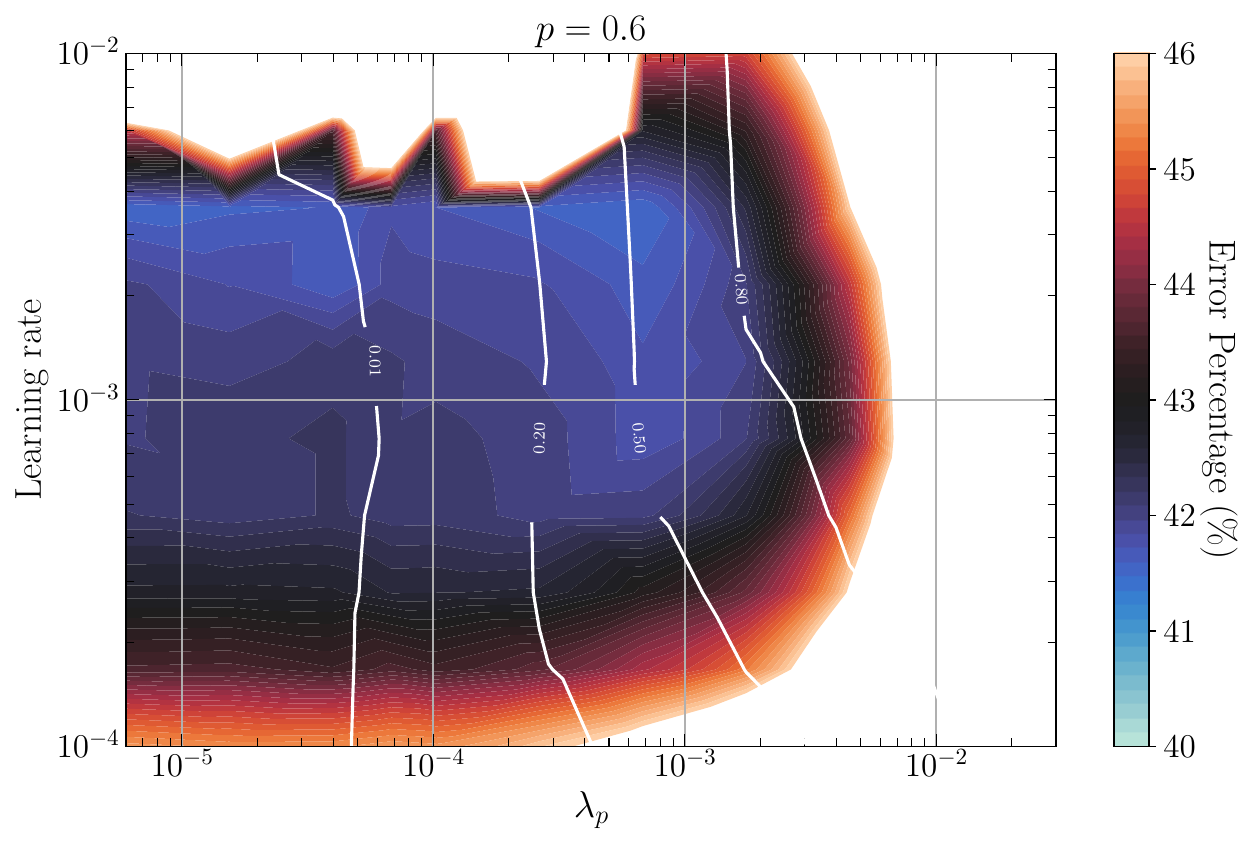}\includegraphics[width=0.3\textwidth]{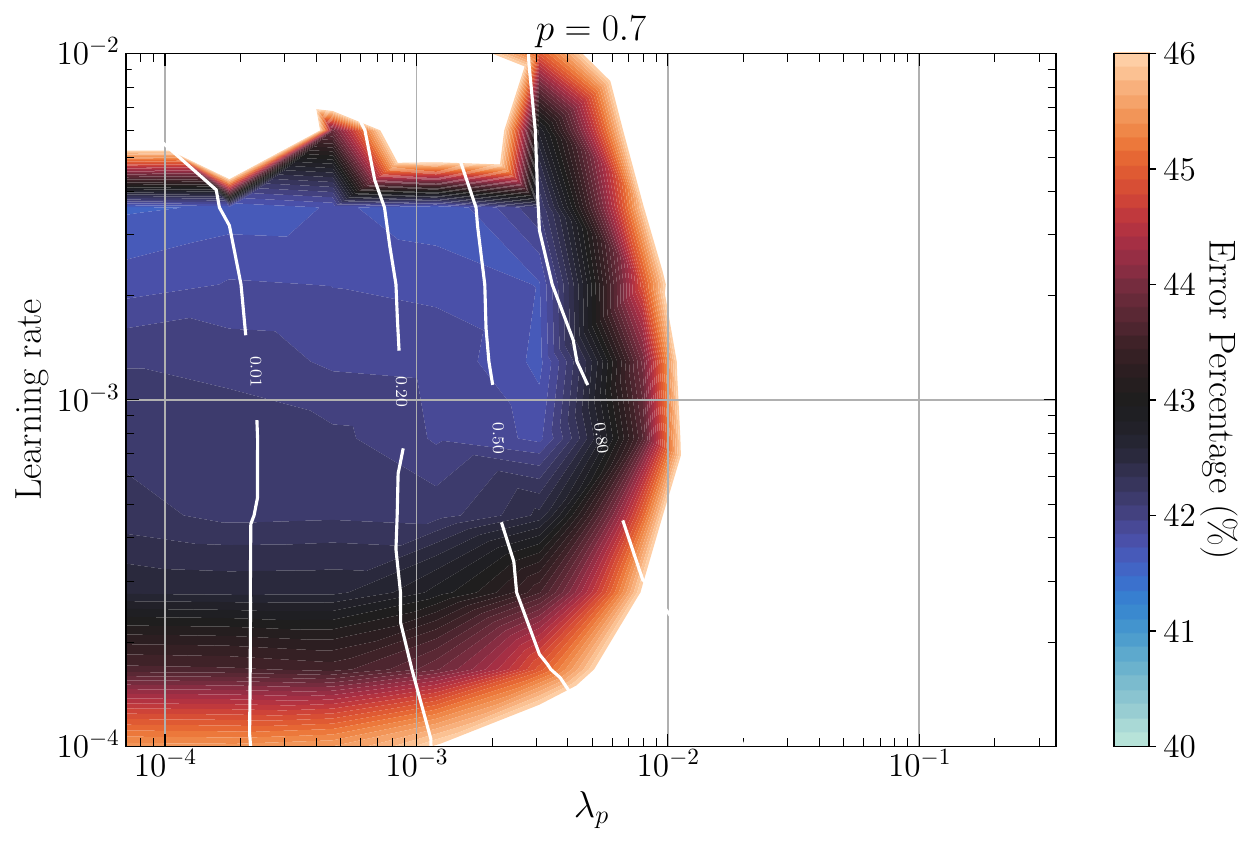}}
    \centerline{\includegraphics[width=0.3\textwidth]{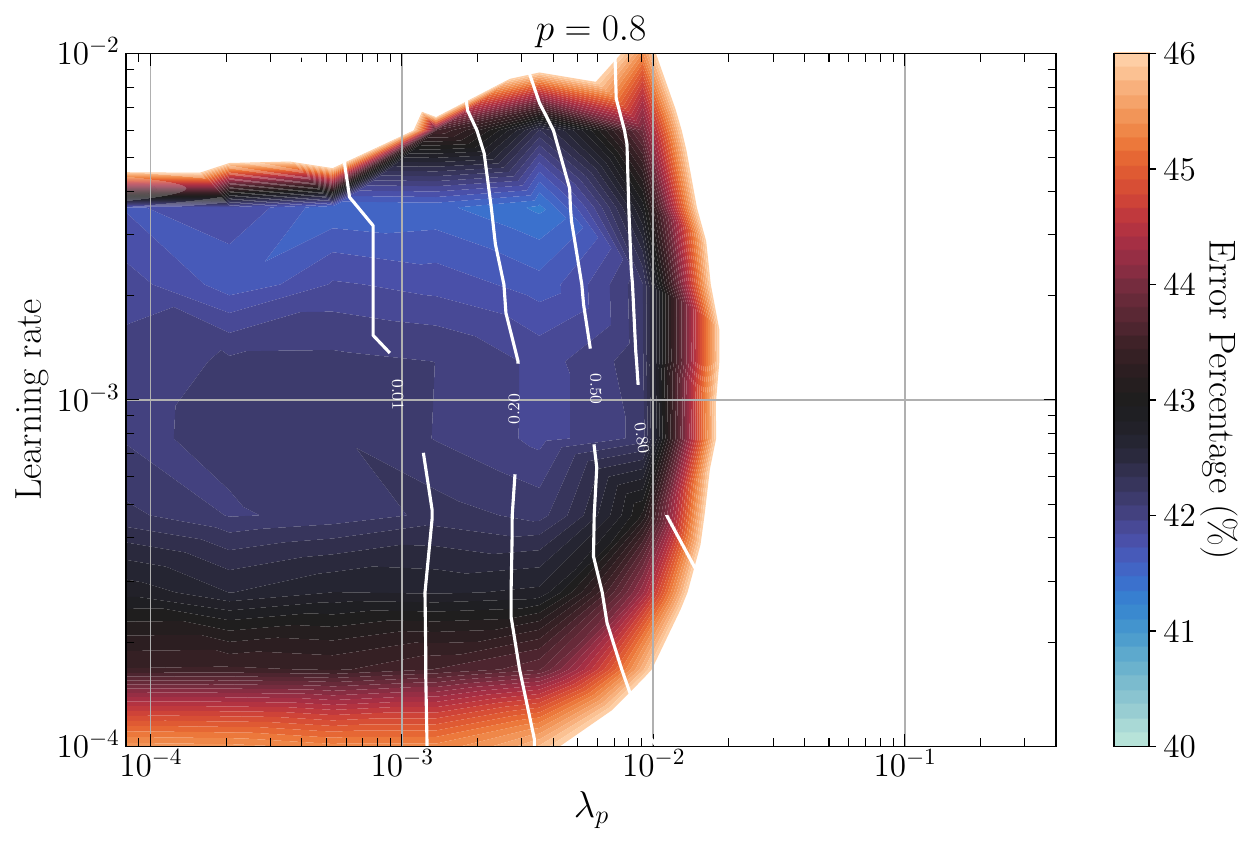}\includegraphics[width=0.3\textwidth]{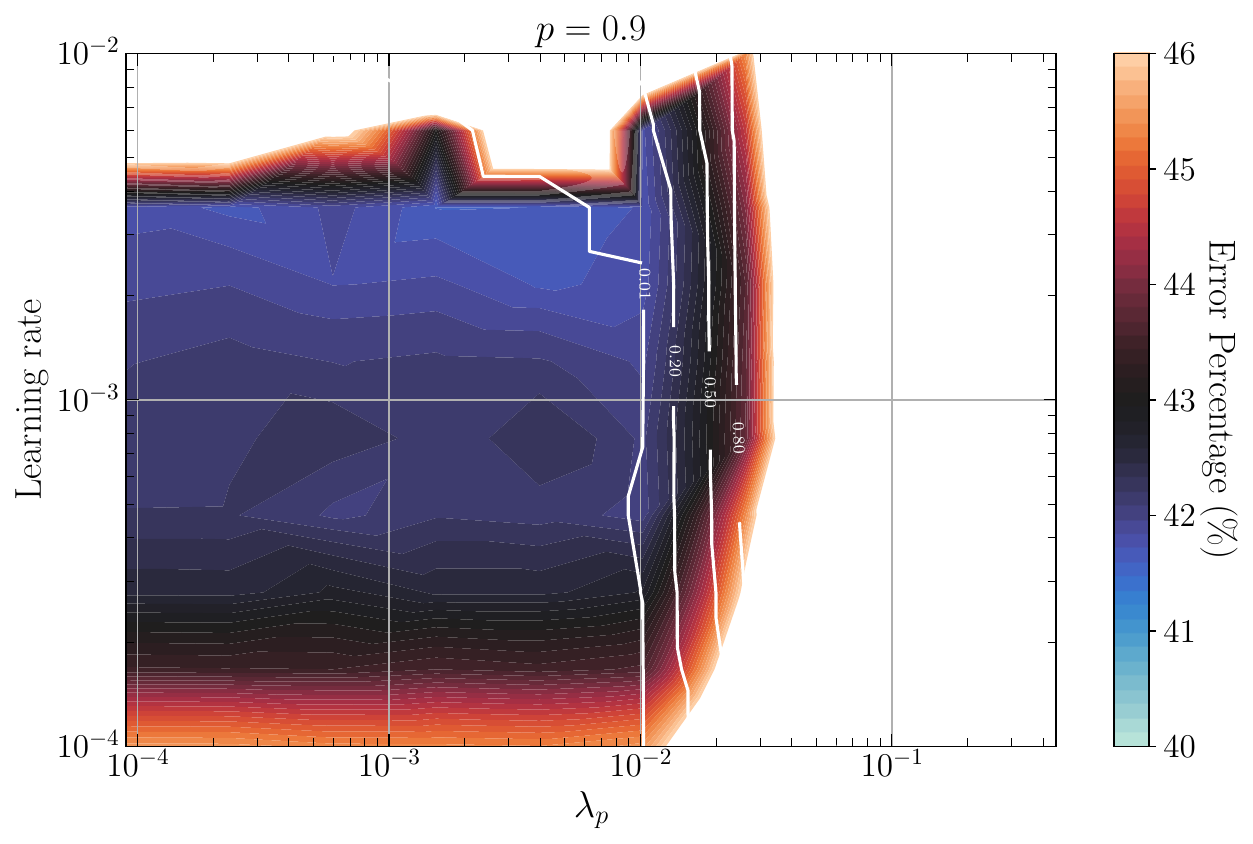}\includegraphics[width=0.3\textwidth]{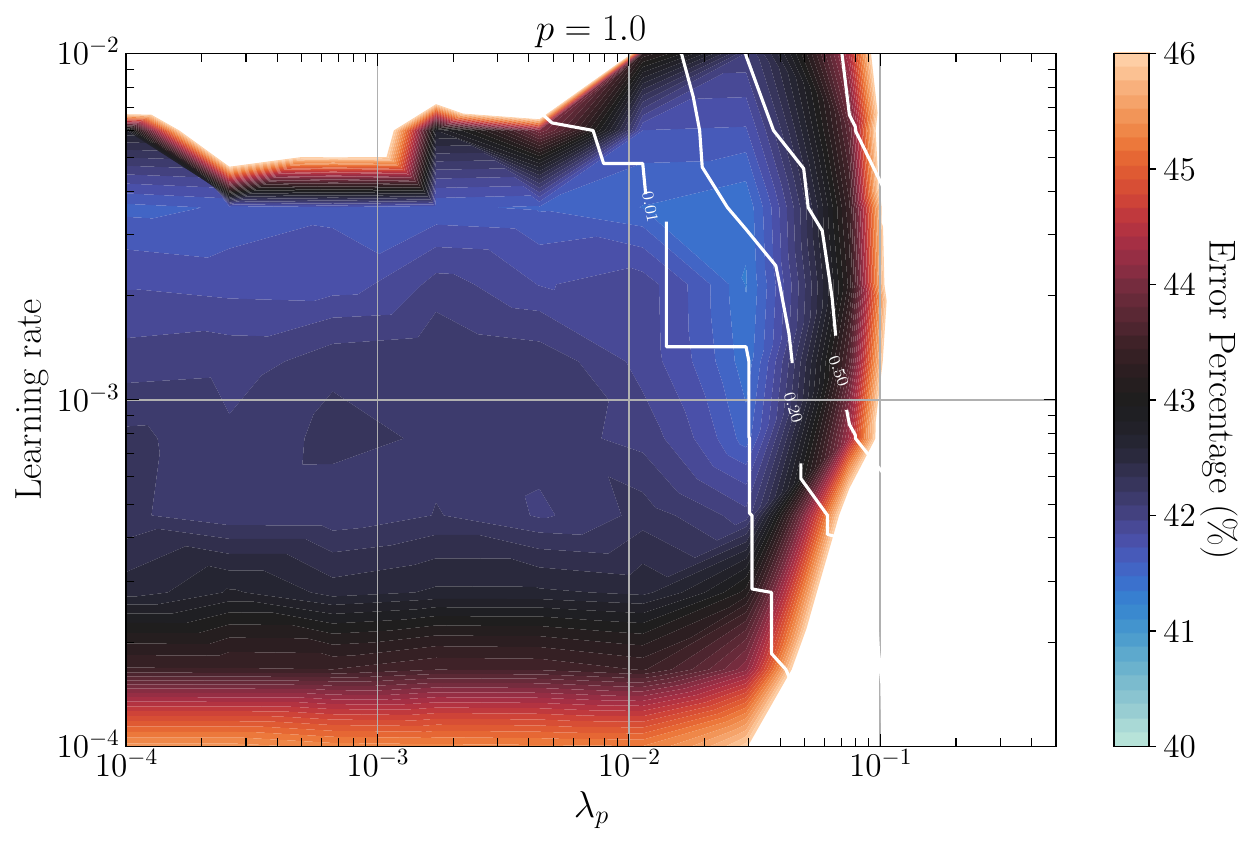}}
    \centerline{\includegraphics[width=0.3\textwidth]{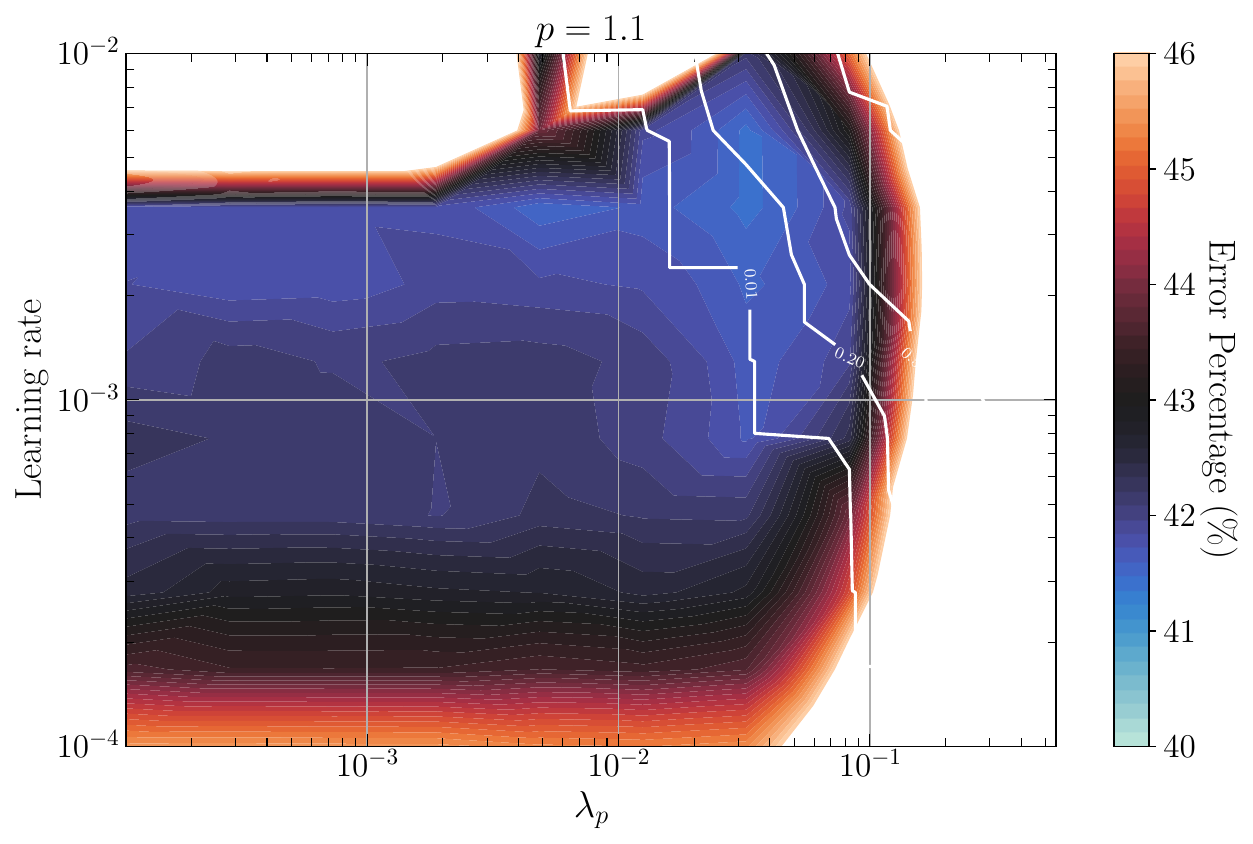}\includegraphics[width=0.3\textwidth]{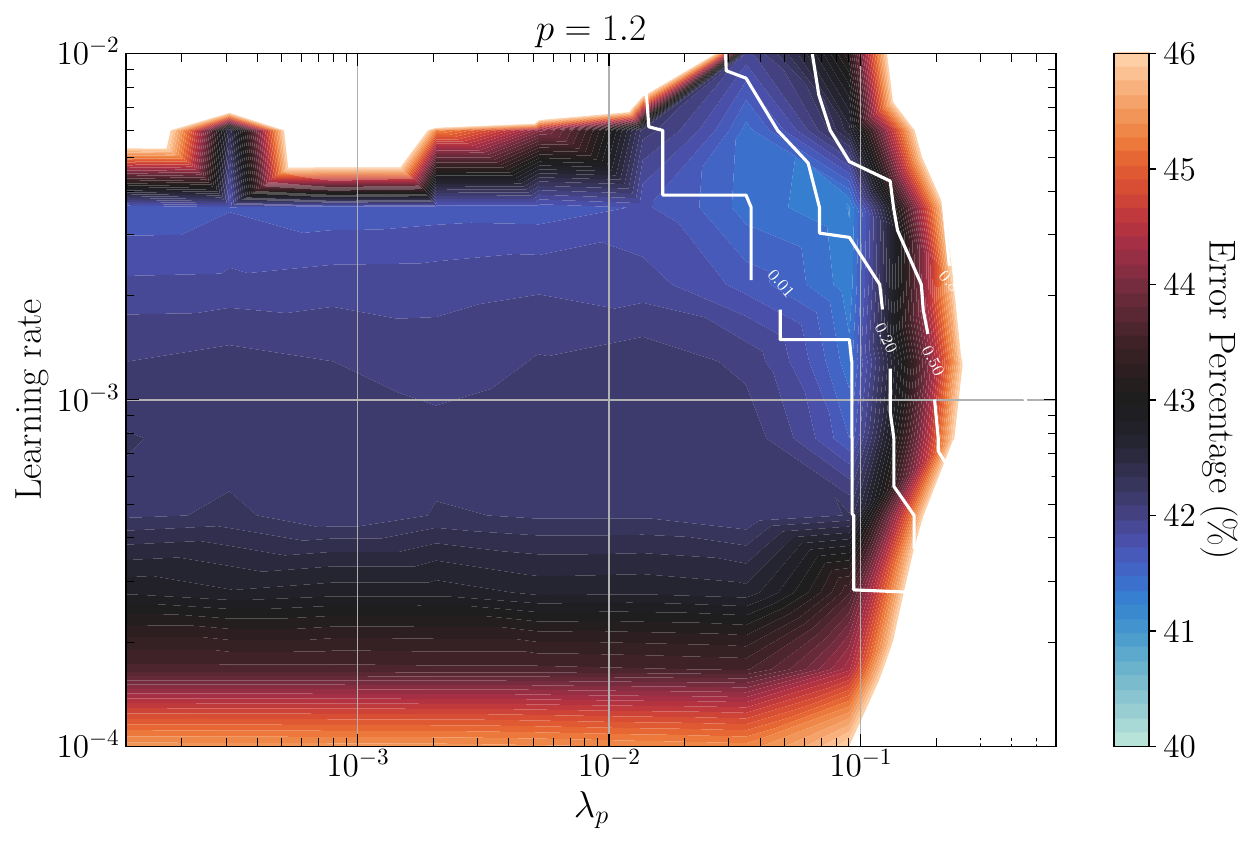}\includegraphics[width=0.3\textwidth]{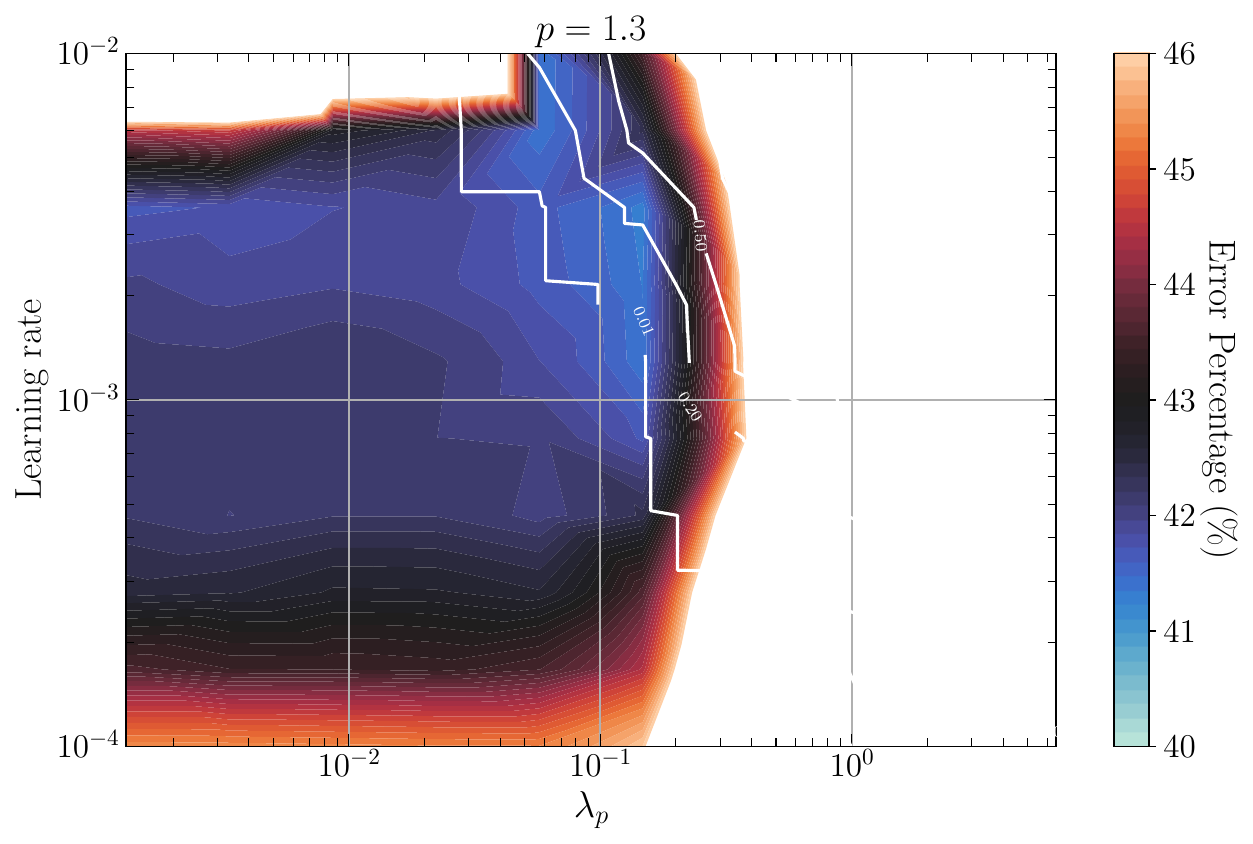}}
    \centerline{\includegraphics[width=0.3\textwidth]{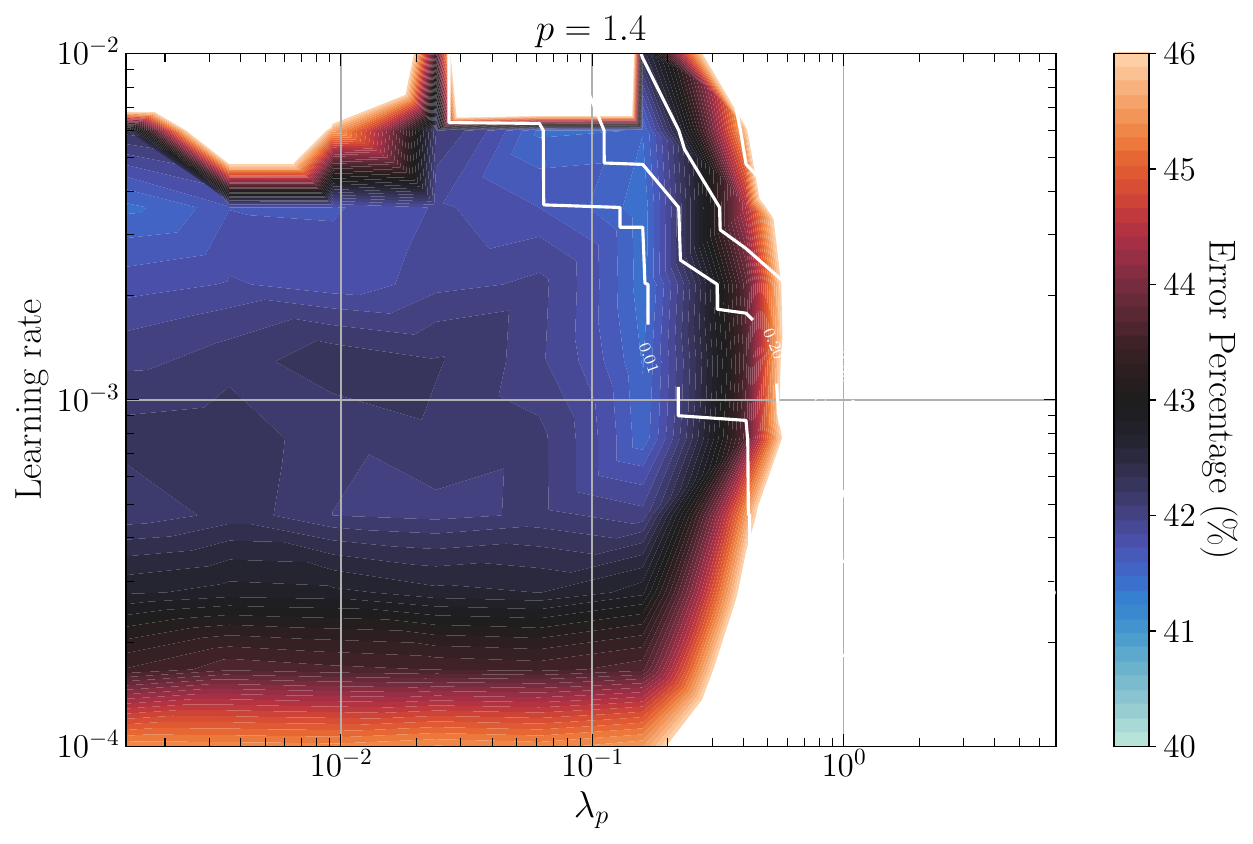}\includegraphics[width=0.3\textwidth]{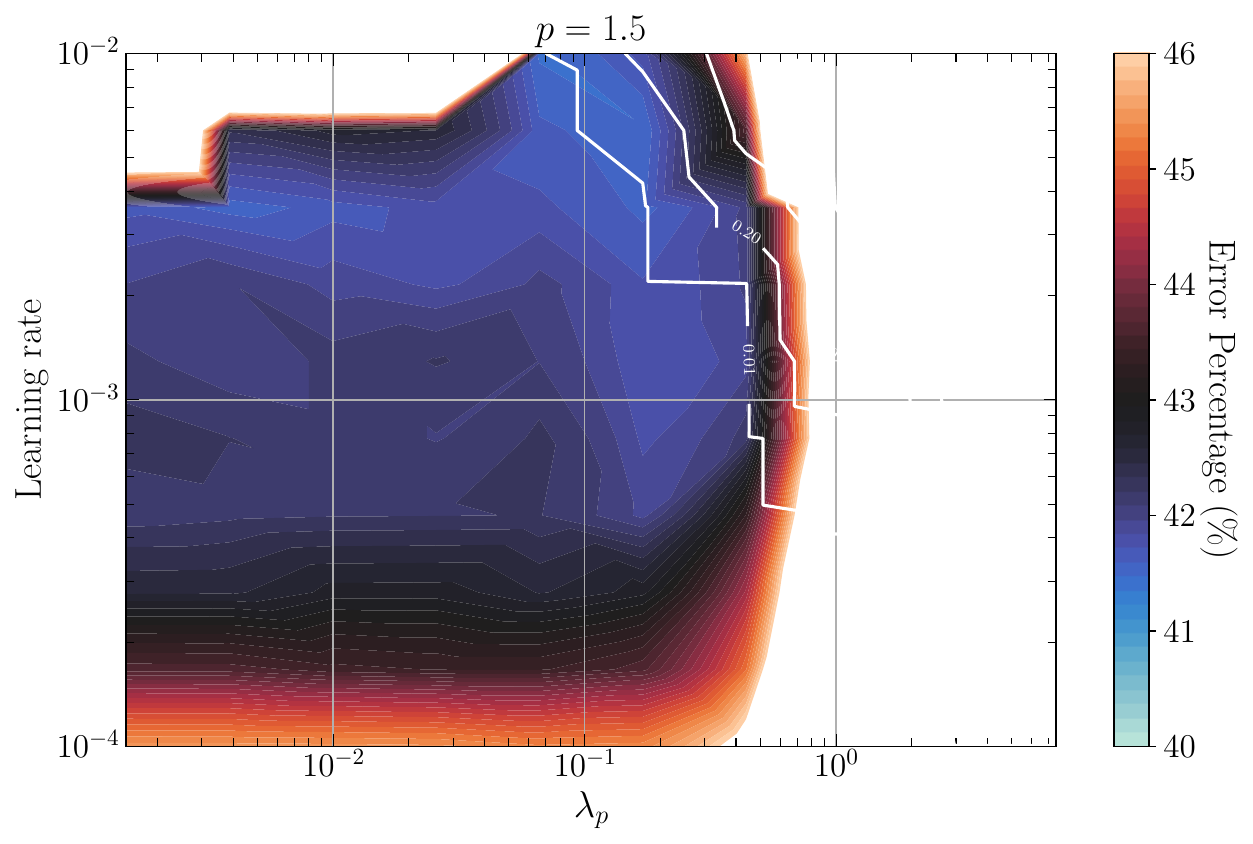}}
    
    \caption{
        Contours of validation accuracy after 5000 training iterations on the $\lambda_p$ vs. learning rate plane, for nanoGPT on Tiny Shakespeare. White contours represent the [0.01, 0.2, 0.4, 0.8] sparisty levlel. 
    }
    \label{fig:nanoGPT_contours}
    \end{center}
    \vskip -0.2in
\end{figure*}

\end{document}